\newcommand{\kevin}[1]{{\color{cyan} #1\xspace}}
\newcommand{\jk}[1]{\textcolor{blue}{[Jiankai: #1]}}
\newcommand{\synzero}{{label-majority}}
\newcommand{\synzerotr}{{label-majority-tr}}
\newcommand{\synzerote}{{label-majority-te}}
\newcommand{\synone}{{label-minority}}
\newcommand{\synposratio}{{label-random-pos}}
\newcommand{\synpredprob}{{label-random-pred}}
\newcommand{\synknn}{{label-neighbors}}
\newcommand{\baselinefull}{{baseline-full}}
\newcommand{\baselinepartial}{{baseline-partial}}
\newcommand{\feaSampling}{{fea-sampling}}
\newcommand{\feaSamplingTr}{{fea-sampling-tr}}
\newcommand{\feaSamplingTe}{{fea-sampling-te}}
\newcommand{\feaRandom}{{fea-random}}
\newcommand{\feaGaussian}{{fea-Gaussian}}
\newcommand{\ourappvanilla}{{FLORIST-vanilla}}
\newcommand{\ourapp}{{FLORIST}}
\newcommand{\actp}{\mathcal{P}_a}
\newcommand{\pasp}{\mathcal{P}_p}
\newcommand{\actid}{\mathcal{I}_a}
\newcommand{\pasid}{\mathcal{I}_p}
\title{Vertical Federated Learning without Revealing Intersection Membership}
\author{Jiankai Sun\thanks{Equal contribution. Correspondence to: \{jiankai.sun, yangxin.yx, chong.wang\}@bytedance.com } \\ jiankai.sun@bytedance.com \And Xin Yang\footnotemark[1] \\ yangxin.yx@bytedance.com \AND Yuanshun Yao \\ kevin.yao@bytedance.com \\ \And
Aonan Zhang \\ aonan.zhang@bytedance.com \AND Weihao Gao \\ weihao.gao@bytedance.com \And Junyuan Xie \\ junyuan.xie@bytedance.com \And  Chong Wang \\ chong.wang@bytedance.com}  
\begin{document}

\maketitle

\begin{abstract}


\textit{Vertical Federated Learning} (vFL) allows multiple parties that own different attributes (\textit{e.g.} features and labels) of the same data entity (\textit{e.g.} a person) to jointly train a model. To prepare the training data, vFL needs to identify the common data entities shared by all parties. It is usually achieved by \textit{Private Set Intersection} (PSI) which identifies the intersection of training samples from all parties by using personal identifiable information (\textit{e.g.} email) as sample IDs to align data instances. As a result, PSI would make sample IDs of the intersection visible to all parties, and therefore each party can know that the data entities shown in the intersection also appear in the other parties, \textit{i.e.} intersection membership. However, in many real-world privacy-sensitive organizations, \textit{e.g.} banks and hospitals, revealing membership of their data entities is prohibited. In this paper, we propose a vFL framework based on \textit{Private Set Union} (PSU) that allows each party to keep sensitive membership information to itself. Instead of identifying the intersection of all training samples, our PSU protocol generates the union of samples as training instances. In addition, we propose strategies to generate synthetic features and labels to handle samples that belong to the union but not the intersection. Through extensive experiments on two real-world datasets, we show our framework can protect the privacy of the intersection membership while maintaining the model utility.

\end{abstract}

\section{Introduction}
\label{sec:intro}


With the increasing tension between data privacy and data-hungry machine learning, \textit{Federated Learning} (FL)~\citep{mcmahan2017communication,Filip2020,yuan2020,Avishek2020} is proposed as a privacy-enhancing technique that allows multiple parties to collaboratively train a model without completely sharing data. Depending on how data are split across parties, FL can be mainly classified into two categories~\citep{ylct19}: \textit{Horizontal Federated Learning}~\citep{Jonas2020,Jenny2020,Sai2020,Li2020} and \textit{Vertical Federated Learning}~\citep{vgsr18,gr18,akk+20,csm+20}. In Horizontal FL, data is split by entity (\textit{e.g.} a person), and data entities owned by each party are complete and disjoint from other parties. In Vertical FL (vFL), a data entity is split into different attributes (\textit{e.g.} features and labels of the same person), and each party might own the same data entity but different attributes. One typical example of vFL is a collaboration between general and specialized hospitals. They might hold the data for the same patient, but the general hospital owns generic information (\textit{i.e.} features) of the patient while the specialized hospital owns the specific testing results (\textit{i.e.} labels) of the same patient. Therefore they can use vFL to jointly train a model that predicts a specific disease examined by the specialized hospital from the features provided by the general hospital.

One critical stage in vFL is identifying the same entities shared by all parties which we define as \textit{intersection}. In standard vFL, this is achieved by \textit{Private Set Intersection} (PSI) protocols ~\citep{PSI2016vlad,PSI2016Benny}. Before the training starts, all parties need to run PSI protocols to identify the intersection to align sample IDs. The sample ID needs to be certain universally identifiable information that can be used to identify entities across organizations. The common option is some personal identifiable information (\textit{e.g.} phone and email). After all parties obtain the sample IDs of the intersection, they can jointly train a model on the intersection.

However, the PSI protocol would make the sample IDs of the intersection visible to all parties. As a result, every party would know that the entities in the intersection also exist in the other parties' data, and we define this information as \textit{intersection membership}. In many real-world privacy-sensitive organizations, membership information is highly sensitive and cannot be shared with the other parties. For example, hospitals cannot reveal which patient is a member and banks cannot disclose which client owns an account. In practice, this leakage significantly limits the applicability of vFL since vFL participants are privacy-sensitive but the membership leakage conflicts with the privacy-preserving intention of using vFL.

Only a few of existing work attempts to address the intersection membership leakage problem in vFL. 
Existing work in protecting the privacy of vFL mostly focuses on preventing data leakage~\citep{cjsy20,wcxco20,labelleakage} rather than membership leakage.
The closest work is~\citep{lzw20}, which adapts PSI to achieve the asymmetrical ID alignment in an asymmetrical vFL. However, their protocol still exposes the intersection membership to one party and cannot protect all parties. To the best of our knowledge, our work is the first attempt to protect intersection membership information for all parties.

In this paper, we design a novel vFL framework to address the problem of intersection membership leakage. Our framework can train models without revealing the membership information while maintaining the model utility. To achieve our goal, we need to overcome two main technical challenges: 1) How to securely align training samples without the knowledge of their intersection? 2) How to design an effective learning mechanism based on the secured alignments to protect privacy while retaining the model utility?

To answer the questions, we propose a novel \textit{Private Set Union} (PSU) protocol that does not reveal sample IDs of the intersection, and therefore no membership information about the intersection would be leaked. Instead, it identifies the union of the training instances and uses it as the training data in our framework. In addition, we design several strategies to generate synthetic features and labels for samples that belong to the union but not the intersection. We summarize our contributions as follows:

\begin{itemize}
    \item We propose the first vFL framework that  protects the membership information of all parties.
    \item We design a novel Private Set Union protocol that can securely align data entities without revealing membership information of the intersection. 
    \item We design several synthetic data generation strategies for samples which belong to the union but not the intersection. Through extensive experiments, we show our strategies are both secure and utility-preserving.
\end{itemize}

\section{Methodology}

\label{sec:methodology}
\subsection{Background: Vertical Federated Learning}


We formally describe the gradient-based two-party vFL settings. Let $\mathcal{D} = (\mathcal{I}, \mathcal{X}, \mathcal{Y})$ denote a complete dataset with $\mathcal{I}$, $\mathcal{X}$, and $\mathcal{Y}$ representing the sample ID space, the feature space, and the label space, respectively. A two-party vFL is conducted over two datasets $\mathcal{D}_p = (\mathcal{I}_p, \mathcal{X}_p, \mathcal{Y}_p)$, $\mathcal{D}_a = (\mathcal{I}_a, \mathcal{X}_a, \mathcal{Y}_a)$, satisfying $\mathcal{X}_p \ne \mathcal{X}_a, \mathcal{Y}_p = \emptyset, \mathcal{Y}_p \ne \mathcal{Y}_a, \mathcal{I}_p \cap \mathcal{I}_a \ne \emptyset$. We refer the party $\mathcal{P}_p$ without labels ($\mathcal{Y}_p = \emptyset$) as the \textit{passive party} and the party $\mathcal{P}_a$ with labels as the \textit{active party}. 

Here we focus on two parties learning a model for a binary classification problem over the domain $\calX \times \cbrck{0, 1}$. The passive and active parties want to learn a composition model $h \circ f$ jointly, where the raw features $X$ and $f: \calX \to \Real^d$ are stored on the passive party side while the labels $y$ and $h: \Real^d \to \Real$ is on the active party side. Let $\ell = h(f(X))$ be the logit\footnote{In our case, we add no additional features in the active party to compute the logit and set $\mathcal{X}_a = \emptyset$.} of the positive class where the positive class's predicted probability is given by the sigmoid function. The loss $L$ of the model is given by the cross entropy. The passive party sends the computation result, $f(\mathcal{X}_p)$ of the intermediate layer (called the \textit{cut layer}) rather than the raw data $\mathcal{X}_p$ to the active party. The active party computes the gradient of $L$ with respect to the input of the function $h$. We denote this gradient by $g$  (equality by chain rule). After receiving $g$ sent from the active party, the passive party computes the gradient of $L$ w.r.t. $f$'s parameters. It is straightforward to apply stochastic gradient descent (SGD) to this setting.


As discussed in Section~\ref{sec:intro}, before we  run the training algorithm, we need to prepare data by first aligning the training data using 
PSI \citep{PSI2016Benny, PSI2016vlad}.
However this will inevitably leak intersection membership.

\subsection{Overview of \ourapp{}}
\label{sec:framework}

We propose {\textbf{F}}ederated {\textbf{L}}earning with{\textbf{O}}ut {\textbf{R}}evealing {\textbf{I}}nter\textbf{S}ec{\textbf{T}}ions, or  {\textbf{FLORIST}}, to remove the need of revealing the intersection membership in vFL. 
Specifically, we perform mini-batch SGD training on the \textit{union} of the training samples. As shown in Figure~\ref{fig:framework_overview}, our framework consists of two modules: Private Set Union (PSU) for ID alignment and synthetic data generation. PSU is designed to align data samples securely without exposing sample IDs of the intersection.
PSU outputs a set of universal IDs with which both parties schedule the mini-batch training. First, in PSU, no information on the intersection set, except for the size of the intersection, is revealed. 
Second, when a party (either passive or active) is given a training sample $({id}_i, {x}_i, {y}_i)$ in a mini-batch generated from the union, it is possible that $id_i$ is not in its dataset. Therefore the party needs to generate synthetic data to ensure a valid training procedure. For example, as shown in Figure \ref{fig:framework_overview}, the passive party provides synthetic features for $[p_4]$, and the active party provides synthetic labels for $[p_2]$ and $[p_3]$, where $[.]$ represents the encrypted ID space. 

\paragraph{Threat Model.} We assume malicious parties are honest-but-curious, \textit{i.e.} $\actp$ and $\pasp$ faithfully run the vFL protocol, but they may infer important information including raw sample data and intersection membership from the exchanged information.
In particular, we consider two privacy leakage scenarios:
1) for $id\in \actid$, 
$\actp$ finds out if $id \in \actid\cap \pasid$ by checking $y_{id}$ and the embedding forwarded by $\pasp$,
and 2) for $id\in \pasid$,
$\pasp$ finds out if $id \in \actid\cap \pasid$ by checking $x_{id}$ and the gradient sent back by $\actp$. 


\begin{figure*}[ht!]
  \centering
    \includegraphics[width=0.76\linewidth]{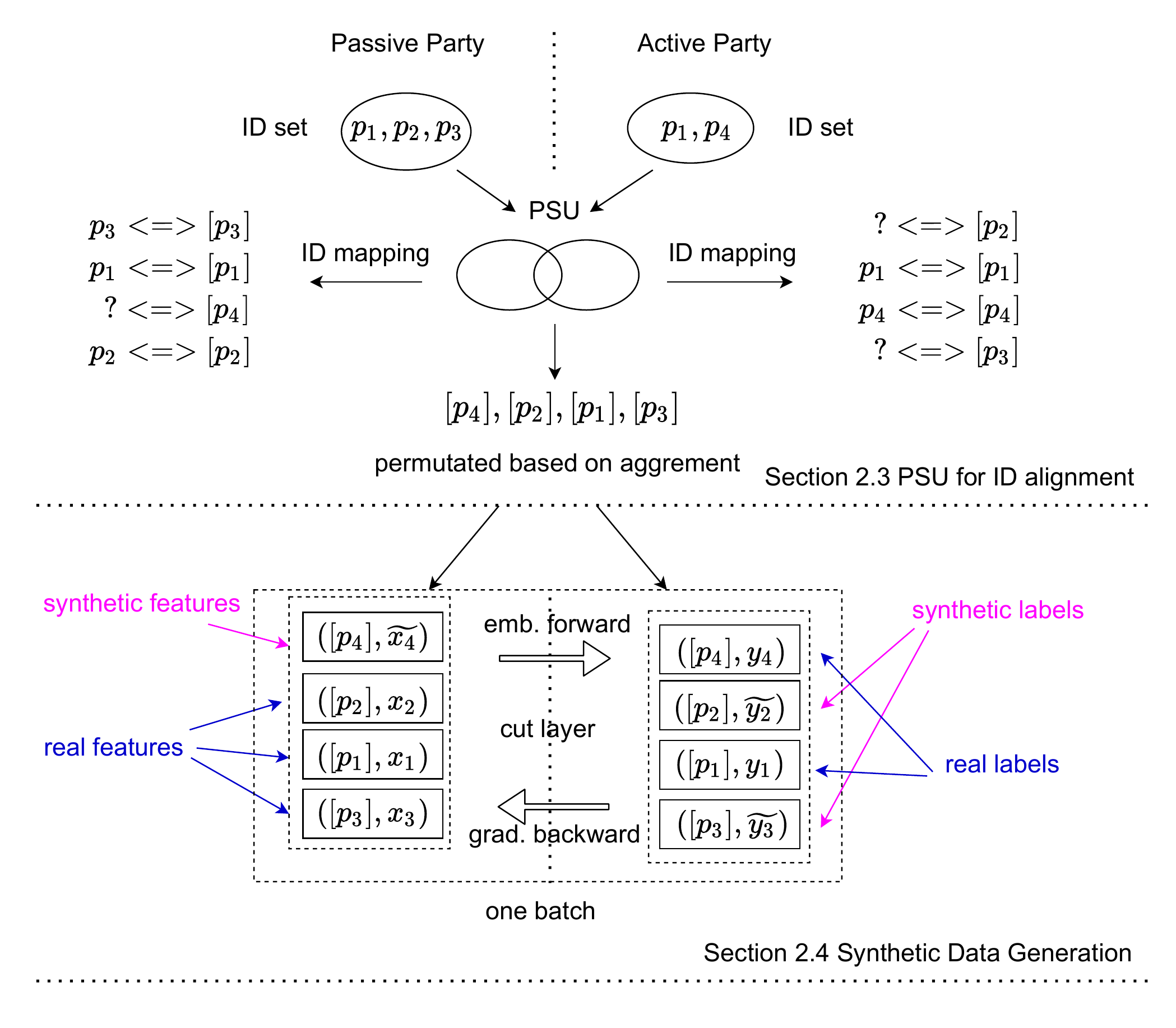}
 \caption{Overview of \ourapp{}. $[p_i]$ is ID $p_i$'s encrypted UID. $\widetilde{x_i}$ and $\widetilde{y_i}$ indicate that those sample IDs use synthetic features and labels respectively. \ourapp{} consists of two modules: PSU for ID alignment (up) and synthetic data generation (bottom). See main texts for details of each component.}
 \label{fig:framework_overview} 
 \vspace{-0.15in}
\end{figure*}

\SetAlgoSkip{}
\LinesNumberedHidden{
\begin{algorithm}[t]
\setstretch{1.3}
\caption{Private Set Union Protocol}
\label{alg:psu}
\hskip0.5em \textbf{Inputs:} $\actp$ holds $\actid$, $\pasp$ holds $\pasid$, and $\actid,\pasid\subset QR(\mathbb{Z}_p^*)$ where $p$ is a prime number,
$q=\frac{p-1}{2}$ is also a prime number,
and $QR(\mathbb{Z}_p^*)$ is the quadratic residues of $\mathbb{Z}_p^*$. All the operations are taken modulus $p$.\\
\hskip0.5em \textbf{Protocol:}\\
 \hskip2em \textbf{Initialization:} \\
 \hskip3em 1. $\actp$ generates random $s_1,s_2,s_3\in \mathbb{Z}_q$\\
{} \hskip3em 2. $\pasp$ generates random $t_1,t_2,t_3\in \mathbb{Z}_q$\\

{} \hskip2em \textbf{First Round Hashing:} \\
{} \hskip3em 3. $\actp$ computes $\actid^{s_1} \equiv \{x^{s_1}|x\in \actid\}$ , randomly shuffles $\actid^{s_1}$, and sends it to $\pasp$\\
{} \hskip3em 4. $\pasp$ computes $\actid^{s_1t_1} \equiv \{x^{t_1}|x\in \actid^{s_1}\}$, randomly shuffles $\actid^{s_1t_1}$, and sends it to $\actp$\\
{} \hskip3em 5. $\pasp$ computes $\pasid^{t_1} \equiv \{x^{t_1}|x\in \pasid\}$, randomly shuffles $\pasid^{t_1}$, and sends it to $\actp$\\
{} \hskip3em 6. $\actp$ computes $\pasid^{s_1t_1} \equiv \{x^{s_1}|x\in \pasid^{t_1}\}$, randomly shuffles $\pasid^{s_1t_1}$, and sends it to $\pasp$\\

{} \hskip2em \textbf{Second Round Hashing:} \\
{} \hskip3em 7. $\actp$ merges $\actid^{s_1t_1}$ and $\pasid^{s_1t_1}$ to obtain the list $(\actid\cup \pasid)^{s_1t_1}$\\
{} \hskip3em 8. $\actp$ computes $(\actid\cup \pasid)^{s_1s_2s_3t_1} \equiv \{x^{s_2s_3}|x\in (\actid\cup \pasid)^{s_1t_1}\}$, randomly shuffles $(\actid\cup \pasid)^{s_1s_2s_3t_1}$, and sends it to $\pasp$\\
{} \hskip3em 9. $\pasp$ computes $(\actid\cup \pasid)^{s_1s_2s_3t_1t_2t_3} \equiv \{x^{t_2t_3}|x\in (\actid\cup \pasid)^{s_1s_2s_3t_1}\}$, randomly shuffles $(\actid\cup \pasid)^{s_1s_2s_3t_1t_2t_3}$, and sends it to $\actp$\\
{} \hskip3em 10. Both $\actp$ and $\pasp$ sort $(\actid\cup \pasid)^{s_1s_2s_3t_1t_2t_3}$ in lexicographic order to obtain UID set $U \equiv \{u_1,\cdots,u_m\}$ where $m = |\actid\cup \pasid|$\\

{} \hskip2em \textbf{Computing Private Hashing:} \\
\hskip3em 11. For $x\in \actid$ \\
 \hskip6em $\actp$ computes $x^{s_2}$, and sends it to $\pasp$\\
 \hskip6em  On receiving $y:=x^{s_2}$, $\pasp$ computes $x^{s_2t_1t_2t_3}=y^{t_1t_2t_3}$, and sends it to $\actp$\\
 \hskip6em  On receiving $z:=x^{s_2t_1t_2t_3}$, $\actp$ computes $x^{s_1s_2s_3t_1t_2t_3}=z^{s_1s_3}$\\
 \hskip6em  $\actp$ saves the bijective mapping $x\leftrightarrow x^{s_1s_2s_3t_1t_2t_3}$
 
\hskip3em 12. For $x\in \pasid$ \\
 \hskip6em $\pasp$ computes $x^{t_2}$, and sends it to $\actp$\\
 \hskip6em On receiving $y:=x^{t_2}$, $\actp$ computes $x^{s_1s_2s_3t_2}=y^{s_1s_2s_3}$, and sends it to $\pasp$\\
 \hskip6em On receiving $z:=x^{s_1s_2s_3t_2}$, $\pasp$ computes $x^{s_1s_2s_3t_1t_2t_3}=z^{t_1t_3}$\\
 \hskip6em $\pasp$ saves the bijective mapping $x\leftrightarrow x^{s_1s_2s_3t_1t_2t_3}$

\hskip0.5em \textbf{Outputs:} Both $\actp$ and $\pasp$ have UID set $U \equiv \{u_1,\cdots,u_m\}$ where $m = |\actid\cup \pasid|$. $\actp$ has the mapping $\actid\rightarrow U$ and $\pasp$ has the mapping $\pasid\rightarrow U$
\end{algorithm}
}

\subsection{Private Set Union for ID-alignment}
\label{sec:private_union}

We first introduce our PSU protocol for secured entity alignment. On a high level, by running the PSU protocol,
parties $\actp$ and $\pasp$ jointly compute a bijection $u: (\mathcal{I}_a \cup \mathcal{I}_p)\leftrightarrow U$,
where $U$ is a set of encrypted universal identifiers (UID) shared by both parties.
Furthermore,
the active party $\actp$ knows $u(id)$ for all $id\in\actid$ and
the passive party $\pasp$ knows $u(id)$ for all $id \in\mathcal{I}_p$.
Therefore the membership information that can be inferred from sample IDs aligned by PSI would not be leaked by PSU.
Finally, under the decisional Diffie–Hellman assumption,
our PSU protocol leaks no information other than $|\mathcal{I}_a |,|\mathcal{I}_p| ,|\mathcal{I}_a \cap \mathcal{I}_p|$ 
to an honest-but-curious malicious party. 

The set $U$ is used to schedule mini-batch training in the following way: for each batch, both parties agree on a sequence $u_1,u_2,\cdots\in U$.
For each $u_i$,
both parties check whether $u^{-1}(u_i)$ is contained in their own ID set,
and if so, they feed the corresponding real data into the model;
otherwise, they apply the data generation methods (detailed described in Section~\ref{sec:synthetic_data_generation}) to generate synthetic data to feed into the model.

Our PSU protocol is presented in Algorithm \ref{alg:psu}.
It consists of two parallel independent parts:
in the first part, both parties jointly compute the union of UID, but they do not know the mapping from the ID set to $U$.
We use two rounds of Diffie–Hellman key exchange scheme so that 
$\actp,\pasp$ cannot identify if some sample ID is in the intersection.
In the second part,
$\actp$ and $\pasp$ compute the mapping from $\actid,\pasid$ to $U$ respectively.


We present the security guarantee of our PSU protocol.
The security assumption is based on the hardness of the Decisional Diffie–Hellman (DDH) problem.
Let $p$ be a prime satisfying that $q=\frac{p-1}{2}$ is also a prime. ($p$ is a so-called \textit{safe prime}).
It is commonly believed that DDH is hard for $QR(\mathbb{Z}_p^*)$,
where $QR(\mathbb{Z}_p^*):=\{x|\exists y, y^2\equiv x \pmod{p}\}$ is the quadratic residues of the cyclic group $\mathbb{Z}_p^*$.
Here we assume that both $\actid$ and $\pasid$ have already been pre-processed so that they are encoded in $QR(\mathbb{Z}_p^*)$.

Our security argument is based on \textit{simulation},
which is a standard proof technique for proving privacy against honest-but-curious adversary~\cite{l17}. 
Here we present an informal statement and more detailed analysis can be found in Appendix~\ref{appendix:psu}.
\begin{theorem}[Security Guarantee of PSU]
Let $n:=\max\{|\actid|,|\pasid|,\lceil \log p \rceil\}$ be the security parameter.
Assume DDH is hard for $QR(\mathbb{Z}_p^*)$, then for malicious $\mathcal{P}\in \{\actp,\pasp\}$ that runs in time polynomial in $n$,
$\mathcal{P}$ learns nothing other than $|\actid|,|\pasid|,|\actid\cup\pasid|$ by running Algorithm \ref{alg:psu}.
\end{theorem}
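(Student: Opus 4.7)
The plan is to prove privacy in the standard ideal/real simulation framework, but now with the adversary permitted to deviate arbitrarily from Algorithm \ref{alg:psu}. Without loss of generality I will treat the case of a PPT malicious adversary $\mathcal{A}$ that corrupts $\actp$ (the symmetric case for $\pasp$ goes through identically). I will build a PPT simulator $\mathcal{S}$ that communicates with $\mathcal{A}$ on one side and with an ideal functionality $\mathcal{F}_{\mathrm{PSU}}$ on the other; $\mathcal{F}_{\mathrm{PSU}}$ accepts input sets from both parties and returns only the triple $(|\actid|,|\pasid|,|\actid\cup\pasid|)$ together with a uniformly random size-$|\actid\cup\pasid|$ subset of $QR(\mathbb{Z}_p^*)$ that will play the role of $U$. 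The simulator first extracts an effective input $\actid^{\ast}$ from $\mathcal{A}$'s outgoing messages, submits $\actid^{\ast}$ to $\mathcal{F}_{\mathrm{PSU}}$, and then emulates every message that the honest $\pasp$ would have sent, using only the leaked triple.

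The cryptographic core is a hybrid argument under the DDH assumption on $QR(\mathbb{Z}_p^*)$. Every outgoing message of honest $\pasp$ in Algorithm \ref{alg:psu} has one of two forms: a shuffled set $S^{\alpha}$ formed by raising a known set to a fresh secret exponent $\alpha\in\mathbb{Z}_q$ (steps 4, 5, 9), or a single element $y^{\beta}$ with $\beta$ a product of fresh secret exponents (step 12 and the responses within step 11). In both cases $\mathcal{A}$ never sees any auxiliary pair of the form $(g,g^{\alpha})$ that ties $\alpha$ to a public reference point. Thus I can replace the raised value for each element, one at a time, by a uniformly random element of $QR(\mathbb{Z}_p^*)$, subject to maintaining the intersection pattern dictated by $|\actid^{\ast}\cup\pasid|$; each substitution is justified by a single DDH instance via the standard reduction that embeds the challenge into the relevant exponent. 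The last hybrid contains no dependence on $\pasid$ beyond the size triple, so $\mathcal{S}$ can produce it from $\mathcal{F}_{\mathrm{PSU}}$'s reply alone.

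The main obstacle, and the step that distinguishes malicious from semi-honest, will be the extraction itself. Because Algorithm \ref{alg:psu} contains no zero-knowledge consistency proofs, $\mathcal{A}$ is free to use inconsistent exponents across the multiset sent in step 3, to include decoy or repeated elements, or to invoke step 11 on values not present earlier. The extractor I plan to build defines $\actid^{\ast}$ as the multiset of $x$-values on which $\mathcal{A}$ actually triggers step 11 and whose behaviour is consistent with step 3; queries that are inconsistent are handled by letting $\mathcal{S}$ answer them with fresh random $QR(\mathbb{Z}_p^*)$ elements and arguing, again through a DDH hybrid, that this is indistinguishable from honest responses because no honestly generated value ever matches an inconsistent query except with negligible probability. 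Making this extraction and consistency bookkeeping watertight, and showing that no sequence of malicious deviations can be leveraged to probe membership of an individual element of $\pasid$ beyond what the size triple already reveals, is where the argument must be done carefully; everything else reduces to the routine DDH-based indistinguishability already sketched.
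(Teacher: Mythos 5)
Your plan targets the wrong adversary model, and that choice opens a gap you cannot close. Although the theorem uses the word ``malicious,'' the paper's threat model explicitly defines malicious parties to be honest-but-curious: they faithfully run Algorithm~\ref{alg:psu} and only inspect their view afterwards. The paper's own proof is accordingly a plain semi-honest simulation: a simulator $S_a$ (resp.\ $S_p$) is handed the corrupted party's true input together with $|\actid|,|\pasid|,|\actid\cap\pasid|$, replaces each incoming message from the other party by suitably structured random elements of $QR(\mathbb{Z}_p^*)$, and a hybrid argument shows that distinguishing neighboring hybrids solves DDH. That portion of your proposal --- the per-element DDH hybrid over the messages of steps 4, 5, 9, 11, 12 --- coincides with the paper's argument, and in the semi-honest model no input extraction is needed, so it already completes the proof.

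The extraction and consistency machinery you add for actively deviating adversaries is not only unnecessary here; it would fail. Your claim that inconsistent step-11 queries can be answered with fresh random group elements ``because no honestly generated value ever matches an inconsistent query except with negligible probability'' is false for this protocol: a deviating $\actp$ that runs step 11 on some $x\notin\actid$ receives $x^{s_2t_1t_2t_3}$ in the real execution, strips $s_2$, and, since it also knows $s_1,s_2,s_3$, can strip $s_1s_2s_3$ from the step-9 output to recover $(\actid\cup\pasid)^{t_1t_2t_3}$; comparing the two reveals whether $x\in\pasid$, a membership bit the ideal functionality never leaks. A simulator that answers with a random element essentially never produces such a match, so real and ideal executions are distinguishable with overwhelming probability. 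Full malicious security of Algorithm~\ref{alg:psu} is therefore not provable without adding consistency checks or zero-knowledge proofs, and the theorem must be read --- as the paper proves it --- in the honest-but-curious sense. Restrict your argument to that model, drop the extractor, and what remains is the paper's proof.
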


\subsection{Synthetic Data Generation}
\label{sec:synthetic_data_generation}
Given a training sample $({id}_i, {x}_i, {y}_i)$ in a mini-batch generated from the union of all samples, it is possible that $id_i$ is not in $\actid$ or $\pasid$. In such cases, $\actp$ or $\pasp$ need to generate synthetic data. We introduce strategies to generate synthetic labels for active party and synthetic features for passive party respectively. 
The proposed strategies are designed to prevent the malicious parties from distinguishing which data point is synthetic while maintaining the model performance.

\paragraph{Synthetic Labels Generation for Passive Party.} The gradient $g$ sent to the passive party is a matrix in $\mathbb{R}^{B \times d}$ with each row belonging to a specific sample ID in the batch with size $B$. Here the gradients as rows of the matrix are gradients of the loss with respect to intermediate computation results of different samples. 
Synthetic gradients should be provided for samples which are not in the sample ID space ($\mathcal{I}_a$) of the active party to protect privacy. There are two ways to generate synthetic gradients: 1) generate synthetic labels firstly and use the corresponding labels to compute training loss and then generate the corresponding gradients; 2) generate synthetic gradients directly without 
setting labels. We design several strategies such as k-nearest neighbors to generate synthetic gradients from actual gradients and empirically find these strategies insecure, i.e., the passive party can distinguish synthetic gradients from real ones.  Therefore we focus on generating synthetic labels. 

The synthetic label generation strategy is based on the following observation: the datasets in many real-world applications such as online advertising and healthcare are imbalanced. The negative samples significantly outnumber the positive ones. For example, Criteo
\footnote{{https://www.kaggle.com/c/criteo-display-ad-challenge/data}} 
and Avazu
\footnote{{https://www.kaggle.com/c/avazu-ctr-prediction/data}} are two real-world large-scale binary classification datasets, and
both datasets are highly imbalanced: only $25\%$ of the Criteo and $17\%$ of the Avazu samples are positive. Hence, given a binary classification problem, it's reasonable to assign the missing labels as negative ($0$ as the label). 

\paragraph{Synthetic Features Generation for Active Party.}

For sample IDs not in $\mathcal{I}_p$, the passive party generates synthetic raw features for them. A naive strategy is to generate random values as synthetic features. However, random values and real features have different distributions;
the active party can utilize tools like clustering and outlier detection from robust statistics to decide which features are synthetic and then $\mathcal{I}_p$ is leaked to the active party. Empirically we demonstrated the effectiveness of using SVD-based outlier detection method~\cite{tlm18} to distinguish random values from real features. 

We present our method that best protects the privacy. We show that, theoretically, the best strategy for the $\pasp$ is to generate synthetic features according to the marginal distribution of real data. Let $D(x,y)$ be the (unknown) ground truth distribution of the feature-label pairs whose IDs are not owned by $\pasp$, and $q(\cdot)$ be the distribution of labels of those data. As $\pasp$ does not have access to labels, its generation strategy can be characterized by a distribution $p(\cdot)$. That is, whenever the passive party needs to generate a synthetic feature, it samples from $p$.
Because features and labels are independent for synthetic data, the joint distribution is of the form $D'(x,y)=p(x)\cdot q(y)$.  In order to have the best privacy, we want $D,D'$ to be as close as possible. 
The following theorem suggests that the marginal distribution is optimal in minimizing KL-divergence. The proof is included in Appendix~\ref{appendix:kl_proof}.
\begin{theorem}[folklore]
\label{theorem_min_kl}
Let $p$ be a distribution over $\mathcal{X}$ and $q$ be a distribution over $\mathcal{Y}$.
Let $D'$ be a joint distribution over $\mathcal{X}\times \mathcal{Y}$ given by $D'(x,y)=p(x)q(y)$.
Let $p^*$ be a distribution over $\mathcal{X}$ given by $p^*(x)=\int_y D(x,y)dy$.
Then $\text{KL}(D||D')$ is minimized when $p=p^*$. 
\end{theorem}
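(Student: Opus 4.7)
The plan is to show that the dependence of $\text{KL}(D\|D')$ on $p$ reduces to a one-dimensional KL divergence between $p^*$ and $p$, which is then minimized by Gibbs' inequality. First, I will expand the definition of KL divergence and separate out the terms that depend on $p$ from those that do not. Writing
\[
\text{KL}(D\|D') = \int\!\!\int D(x,y) \log\frac{D(x,y)}{p(x)\,q(y)}\,dx\,dy,
\]
I split the logarithm into $\log D(x,y) - \log p(x) - \log q(y)$. The terms $\int\!\!\int D(x,y)\log D(x,y)\,dx\,dy$ and $\int\!\!\int D(x,y)\log q(y)\,dx\,dy$ do not involve $p$, so they are constants for the purpose of the optimization.

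Next, I collapse the $p$-dependent term using Fubini:
\[
-\int\!\!\int D(x,y)\log p(x)\,dx\,dy
  = -\int \Bigl(\int D(x,y)\,dy\Bigr)\log p(x)\,dx
  = -\int p^*(x)\log p(x)\,dx.
\]
Then I add and subtract $\int p^*(x)\log p^*(x)\,dx$ to rewrite this as
\[
\int p^*(x)\log\frac{p^*(x)}{p(x)}\,dx \;-\; \int p^*(x)\log p^*(x)\,dx
  = \text{KL}(p^*\|p) + H(p^*),
\]
where the entropy term $H(p^*)$ does not depend on $p$. Therefore minimizing $\text{KL}(D\|D')$ over $p$ is equivalent to minimizing $\text{KL}(p^*\|p)$ over $p$.

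Finally, Gibbs' inequality gives $\text{KL}(p^*\|p)\ge 0$ with equality iff $p=p^*$ almost everywhere, so the minimizer is $p=p^*$. There is no real obstacle here; the only mild subtlety is making sure that interchanging the integrals is justified (standard under the implicit assumption that $D,p,q$ are well-behaved densities and that $\text{KL}(D\|D')$ is finite, which in particular rules out the degenerate case $p(x)=0$ on the support of $p^*$). Since this is a folklore identity, I would present the derivation tersely in the appendix with a single display equation and a one-line appeal to Gibbs' inequality.
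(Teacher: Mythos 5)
Your proposal is correct and follows essentially the same route as the paper's proof: both expand $\text{KL}(D\|D')$, discard the terms independent of $p$, apply Fubini to collapse $\int\!\!\int D(x,y)\log p(x)\,dx\,dy$ into $\int p^*(x)\log p(x)\,dx$, and reduce the problem to minimizing $\text{KL}(p^*\|p)$, which is handled by non-negativity of KL divergence. The only cosmetic difference is that you make the constant entropy term $H(p^*)$ explicit, whereas the paper absorbs it silently into the $\arg\min$ chain.
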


Nevertheless, $\pasp$ has no access to $p^*$ because $\pasp$ simply does not have these features.
Here we make the assumption that the data distribution is the same across data corresponding to ID sets $\actid \cap \pasid$, $\actid-\pasid$ and $\pasid-\actid$.
Then $\pasp$ could think of its feature set $\mathcal{X}_p$ follows the distribution of $p^*$.

As it is difficult to accurately estimate marginal distribution on input features (a high dimension distribution), we use the following sampling method to generate synthetic features: the passive party uses a random real sample to fill in the non-existing features. The raw features (including real and synthetic) will be fed to the same network and generate the intermediate embeddings at the cut layer.

\paragraph{Logits Calibration.}
As our data generation scheme
changes the data distribution,
it can cause large adaptive calibration (ACE) error~\citep{ACE2019}. ACE measures how well a model’s predicted probabilities of outcomes reflect true probabilities of those outcomes. We adapt a method named \textit{logits shift}~\citep{Chapelle2015ads} to handle the calibration error.
On a high level, since both parties know how synthetic data is generated,
they can estimate how the data distribution deviates from the underlying one, which allows them to modify the model to account for ACE. Empirically we demonstrated that this method can reduce the ACE significantly. Details of how to do the calibration via logits shift can be seen in the appendix.

\section{Empirical Study}
\label{sec:experiments}

In this section, we experimentally evaluate the proposed framework. Since PSU is an alignment protocol and it is proved to be secure, the main goal of the empirical study is to demonstrate the effectiveness of the data generation strategies.
We run three groups of simulations: 1) \textbf{Simulations with Synthetic Labels Only}. During training, 
partial labels of the active party are synthetic. We measure the performance of our proposed synthetic label generation with varying synthetic label ratio $\beta$. 2) \textbf{Simulations with Synthetic Features Only}. During training, 
partial features provided by the passive party are synthetic. We check the performance with varying synthetic feature ratio $\alpha$. 3) \textbf{Simulations with Both Synthetic Labels and Features}. During training, partial data of both parties are synthetic. We check our performance with varying $\alpha$ and $\beta$.

\paragraph{Dataset and Model.} We use two real-world datasets: Criteo 
 and Avazu. 
Criteo is a large-scale binary classification dataset with approximately $45$ million user click records in online advertising. Avazu contains approximately $40$ million entries ($11$ days of clicks/not clicks of Avazu data). We defer the similar results on Avazu to Appendix~\ref{sec:exp_avazu} and only report the results of Criteo in this section. We split each dataset randomly into two parts: with $90\%$ for training and the rest for tests.
 We train a modified Wide\&Deep model \citep{cheng2016wide} where the passive party consists of embedding layers for input features and two layers of $128$-unit ReLU activated multilayer perceptron (half deep part) and the active party consists of the last two layers of the deep part. In every iteration, the passive party sends a mini-batch of $8, 192$ examples' $128$-dimensional vectors to the active party and the active party sends the gradients of the loss w.r.t. these vectors back to the passive side. 


\paragraph{Evaluation Metrics.} We measure the model utility by the AUC and ACE. 
A model with an ideal utility should have a high AUC and a low ACE. We
also measure the security of the synthetic data generation stage since an honest-but-curious party could try to infer which data samples from the other party are synthetic or real. Therefore it might introduce new membership leakage. To this end, theoretically we can estimate the difference between the distribution $D$ of actual data and the distribution $D'$ of synthetic data. Yet these distributions are over high-dimensional data and it is intractable to compute the distance. Instead, we use spectral attack \citep{tlm18}, a 2-clustering outlier detection algorithm that can be used to predict if an embedding or gradient is real or synthetic. The AUC of its prediction, which we call it as \textit{attack AUC}, measures how well the malicious parties could distinguish real data and synthetic data generated by our strategies. A closer to $0.5$ attack AUC is considered to be more secure. More details on the spectral attack can be found in Appendix~\ref{sec:spectral_attack}.

\subsection{Simulations with Synthetic Labels Only}
\label{sec:exp_synthetic_label_only}

We conduct experiments to simulate the scenario that only the active party has to provide synthetic labels for unowned instances during training. The passive party owns all the sample IDs and hence it provides real features for all training instances. Based on the observation that both Criteo and Avazu datasets are highly imbalanced (only $25\%$ of the samples in Criteo and $17\%$ in Avazu are positive), our \texttt{\synzero{}} generates synthetic label $\widetilde{y_i} = 0$,  $\forall \ {id}_i \notin \mathcal{I}_a$. We also test several other synthetic label generation strategies for the active party: $\forall \ {id}_i \notin \mathcal{I}_a$,  1) \texttt{\synone} sets $\widetilde{y_i} = 1$; 2) \texttt{\synposratio} randomly samples a label for $\widetilde{y_i}$, based on the positive instance ratio of $\mathcal{D}_a$; 3) \texttt{\synpredprob} randomly samples a label for $\widetilde{y_i}$, based on the predicted positive probability for ${id}_i$; 4) \texttt{\synknn} determines $\widetilde{y_i}$, based on a k-nearest-neighbors strategy (with $k=3$ and cosine similarity to measure the similarity between cut layer embeddings).


We test different synthetic label ratio $\beta$ and report performance from different generation strategies. We compare the above generation strategies with two baselines. 1) \texttt{\baselinepartial}: Given a $\beta$, it only uses the rest $(1-\beta)$ real labels to train the model; 2) \texttt{\baselinefull}: it uses the whole real dataset to train the model, which can be viewed as an upper-bound for all competitors. 

For each $\beta$, every experiment uses the same fixed random seed to select unowned instances of the passive party. All strategies have the same real data as used in \baselinepartial{} to train the model. The corresponding results of AUC and ACE are shown in Figure ~\ref{fig:criteo_utility_privacy} (a) and (d) respectively. We can observe that using less data decreases the model performance. Unsurprisingly, the gap between \baselinepartial{} and \baselinefull{} gradually enlarges with increasing $\beta$. For example, when $\beta = 95\%$,  \baselinepartial{} drops about $3.8\%$ in comparing with \baselinefull. 


As shown in Figure \ref{fig:criteo_utility_privacy} (a), \synzero{} can achieve comparable AUC with \baselinepartial{}.
However, \synzero{} changes the data distribution and causes high ACE. As shown in Figure \ref{fig:criteo_utility_privacy} (d), \synzero{} has a larger ACE than baselines, and therefore it is necessary to calibrate the predicted logits for \synzero{}. Depending on when to have the calibration, we provide two variants of \synzero{}: \texttt{\synzerote{}} and \texttt{\synzerotr{}}. For \synzerote{}, we leave the training logits unchanged and perform the calibration in the testing. For \synzerotr{}, we only calibrates the predicted logits in the training. More details are included in Appendix \ref{sec:logits_shift_synthetic_label}.

As shown in Figure \ref{fig:criteo_utility_privacy} (d), we find both options reduce the ACE significantly and achieve similar ACE with baselines. Regarding the AUC, \synzerotr{} only drops about $0.67\%$ and $0.33\%$ in comparing with \baselinepartial{} when $\beta$ is $95\%$ and $50\%$ respectively. Meanwhile, as shown in Table \ref{tab:syn_label_criteo_leak_auc}, the attack AUC of \synzero{} and its variants are all around $0.57$. Though \synpredprob{} has the best privacy among all competitors (detailed analysis can be seen in Appendix \ref{sec:syn_label_pred_prob}), its utility is worse than \synzero{}.  
Therefore our proposed \synzero{} is highly effective in both protecting privacy and maintaining model utility.

\begin{table}[t]\centering
\caption{Attack AUC of different synthetic label generation strategies on Criteo.}\label{tab:syn_label_criteo_leak_auc}\scriptsize
\begin{tabular}{c|cccccccc}\toprule
$\beta=0.5$ &.-majority &.-majority-tr &.-majority-te &.-minority &.-random-pos &.-random-pred &.-neighbors \\ \midrule
Attack AUC &0.5761 &0.5768 &0.5764 &0.6511 &0.5342 &0.5074 &0.5445 \\
\bottomrule
\end{tabular}
\vspace{-0.15in}
\end{table}

\begin{figure*}[ht!]
\vspace{-0.1in}
  \begin{minipage}[b]{0.33\linewidth}
  \centering
    \includegraphics[width=\linewidth]{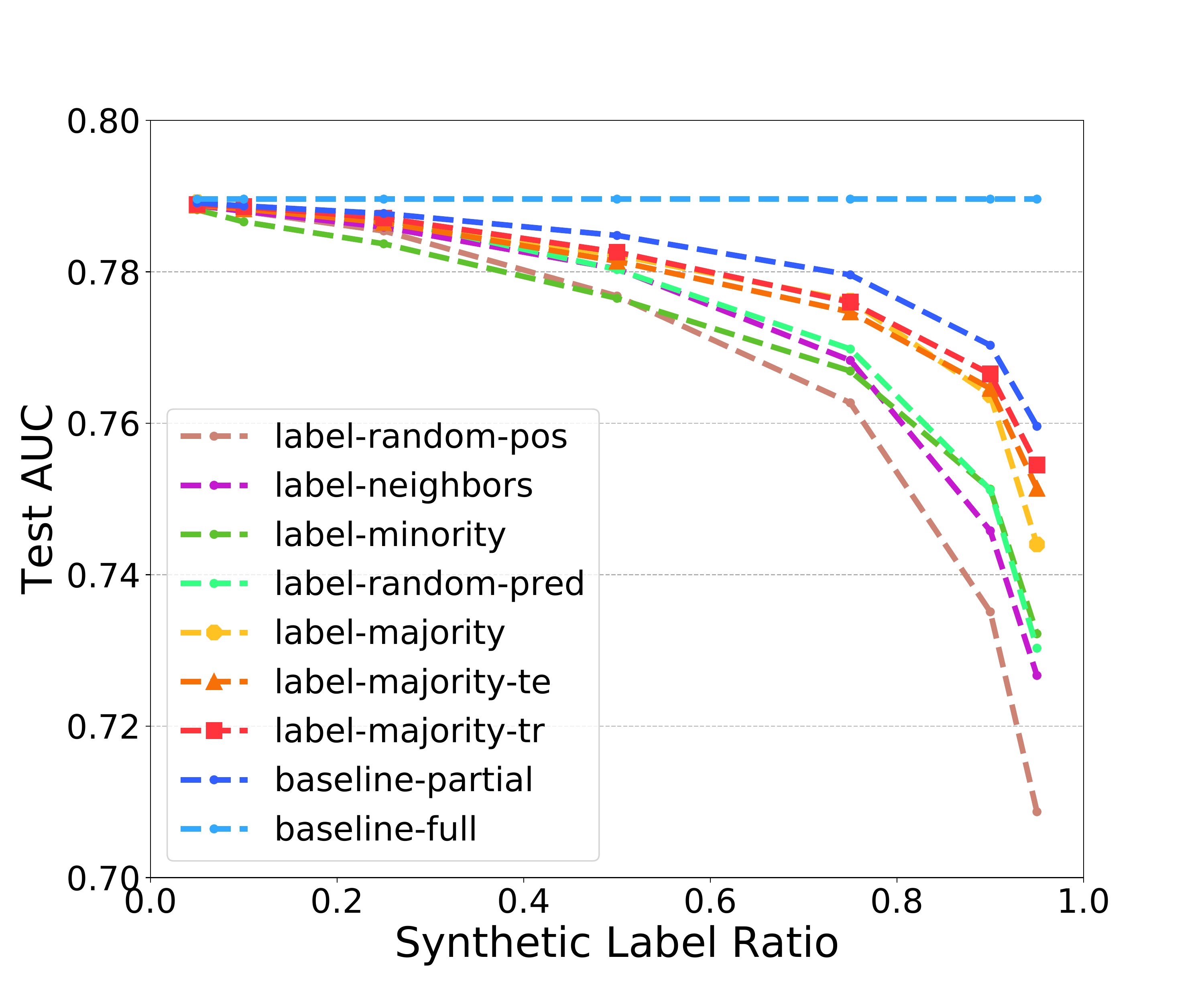}
      \caption*{ (a): Synthetic Label AUC}
  \end{minipage}
  \begin{minipage}[b]{0.33\linewidth}
  \centering
    \includegraphics[width=\linewidth]{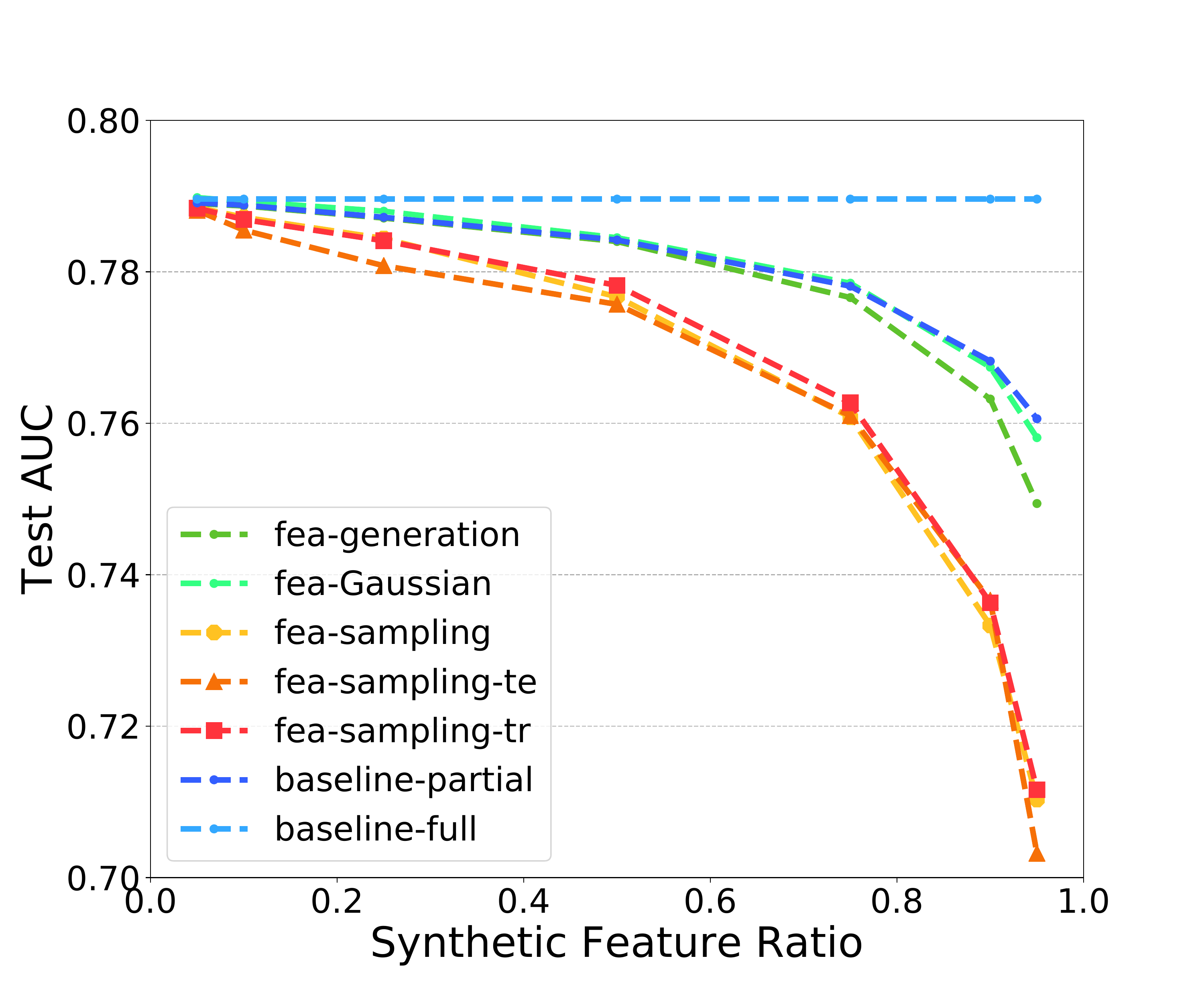}
    \caption*{(b): Synthetic Feature AUC}
  \end{minipage}
   \begin{minipage}[b]{0.33\linewidth}
  \centering
    \includegraphics[width=\linewidth]{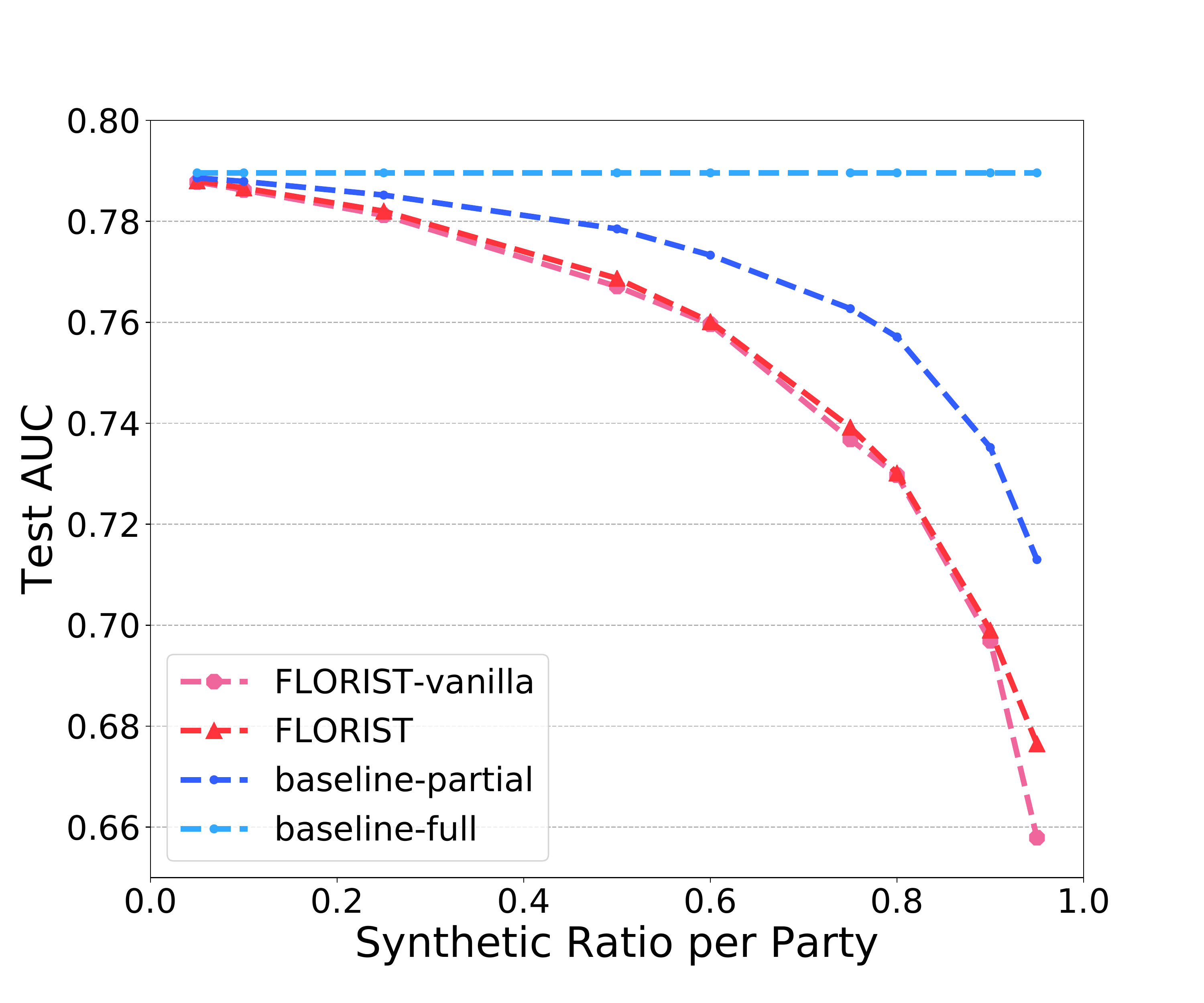}
      \caption*{(c): Synthetic Both AUC}
  \end{minipage}
  \medskip
    \begin{minipage}[b]{0.33\linewidth}
  \centering
    \includegraphics[width=\linewidth]{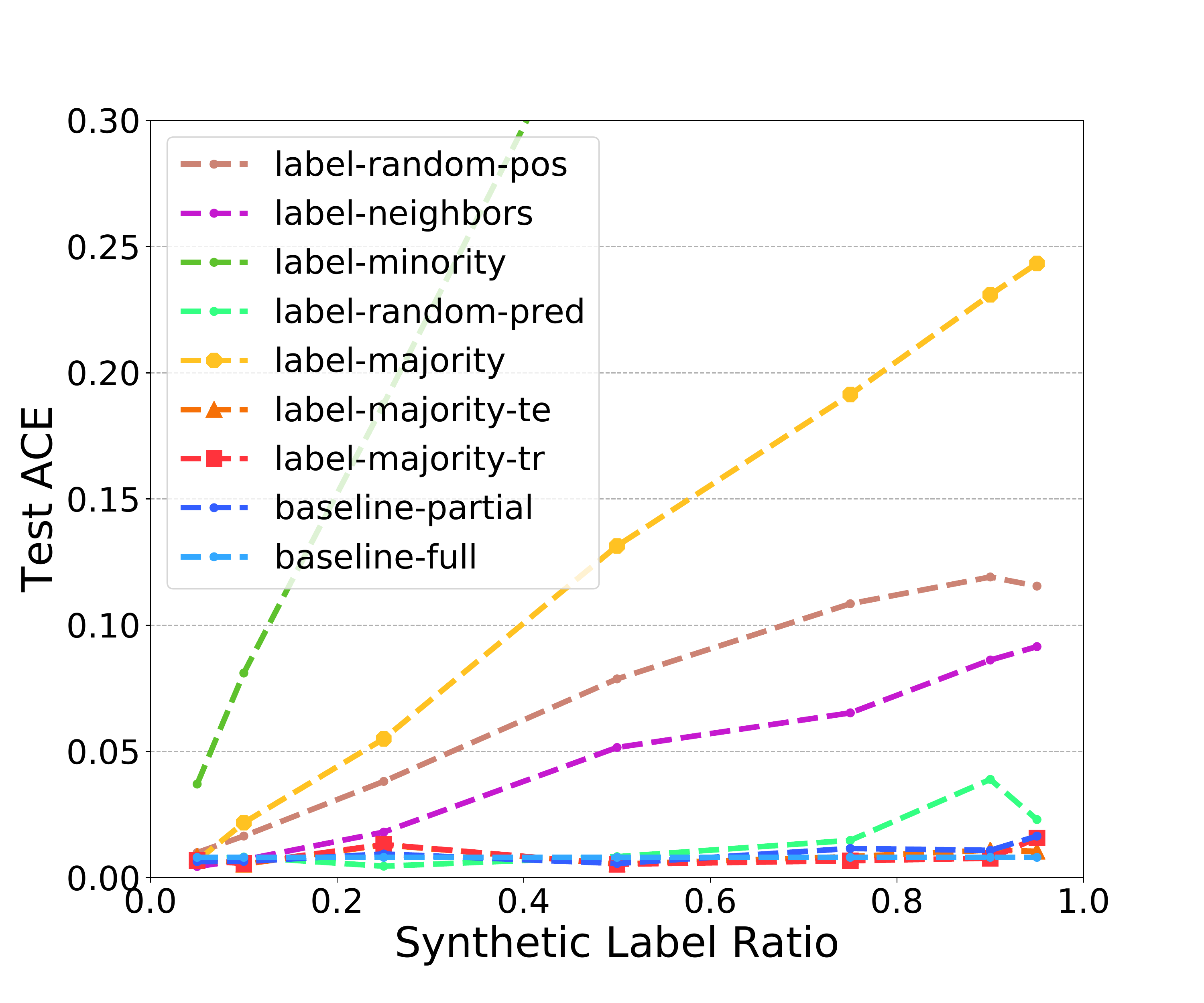}
      \caption*{(d): Synthetic Label ACE}
  \end{minipage}
  \begin{minipage}[b]{0.33\linewidth}
  \centering
    \includegraphics[width=\linewidth]{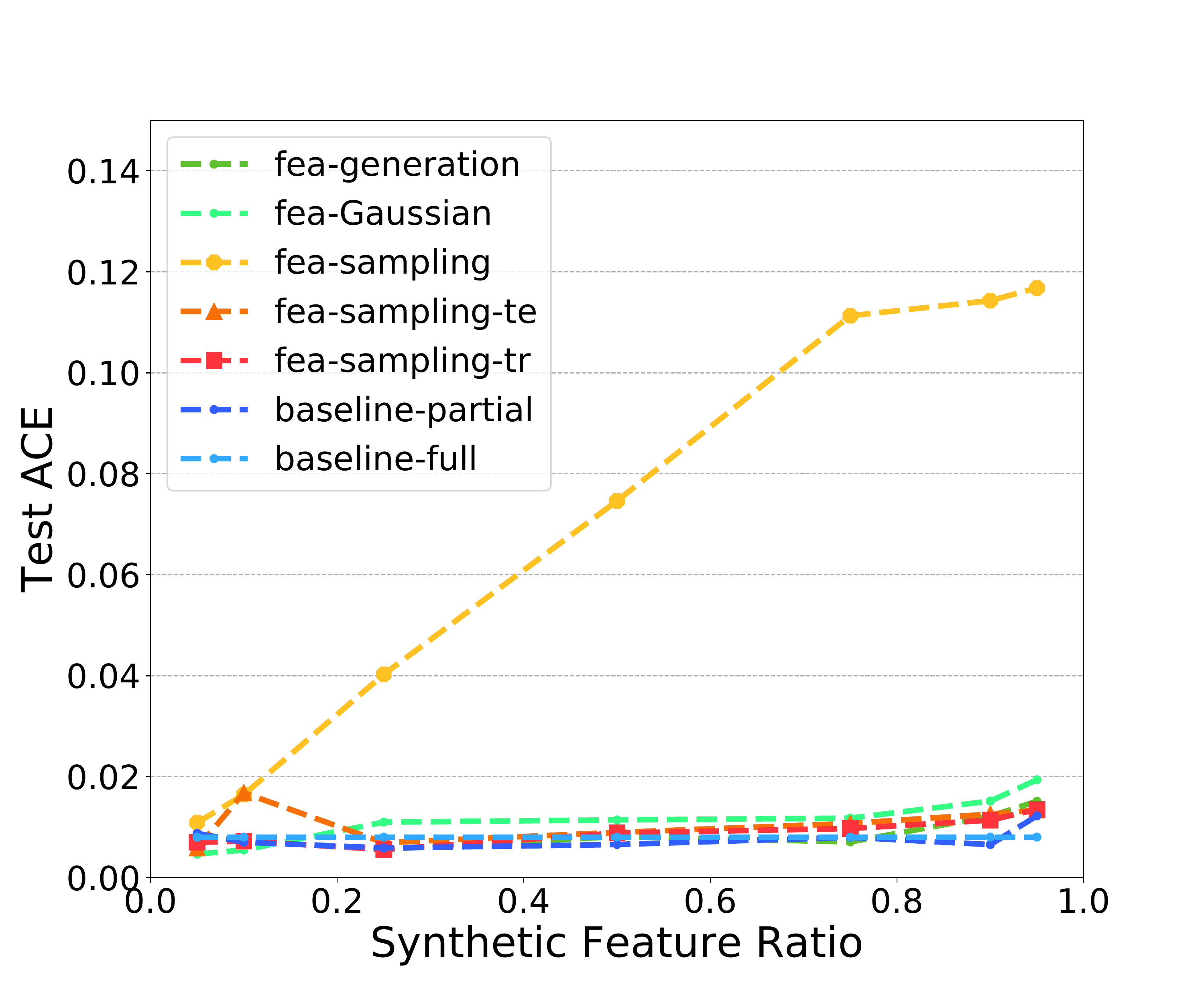}
    \caption*{(e): Synthetic Feature ACE}
  \end{minipage}
   \begin{minipage}[b]{0.33\linewidth}
  \centering
    \includegraphics[width=\linewidth]{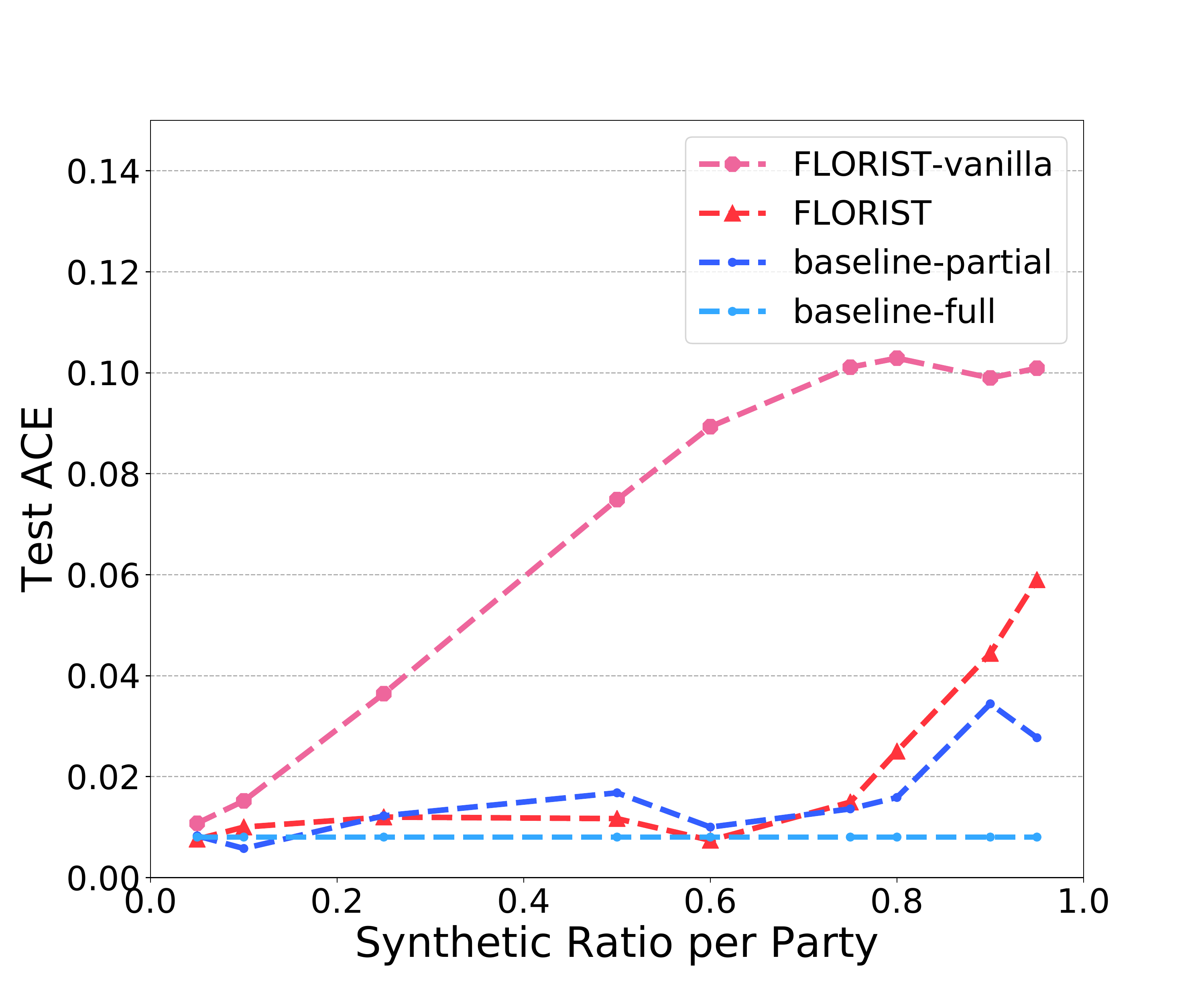}
      \caption*{(f): Synthetic Both ACE}
  \end{minipage}
  \medskip
    \begin{minipage}[b]{0.33\linewidth}
  \centering
    \includegraphics[width=\linewidth]{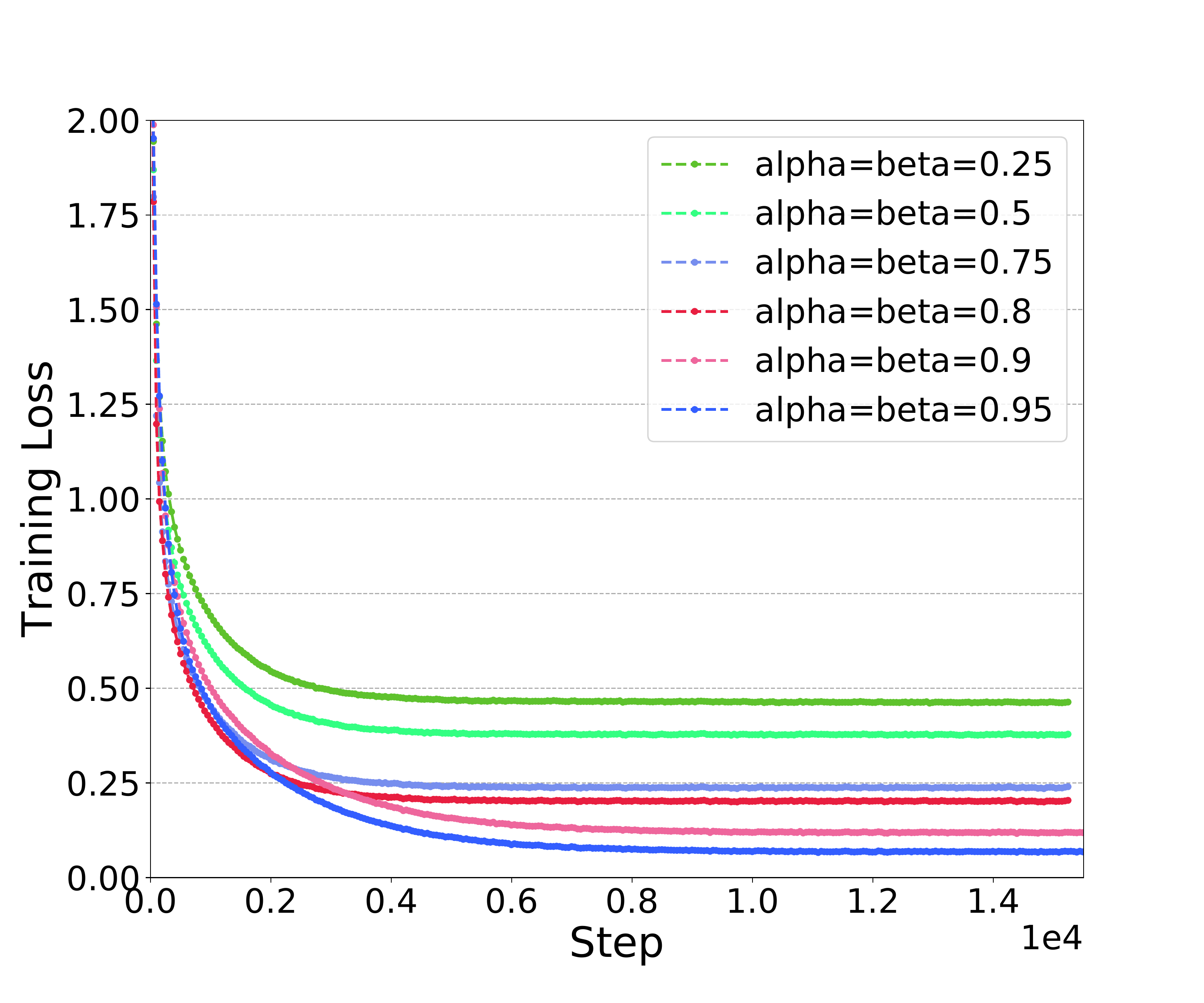}
      \caption*{(h): Training Loss of \ourapp{}}
  \end{minipage}
  \begin{minipage}[b]{0.33\linewidth}
  \centering
    \includegraphics[width=\linewidth]{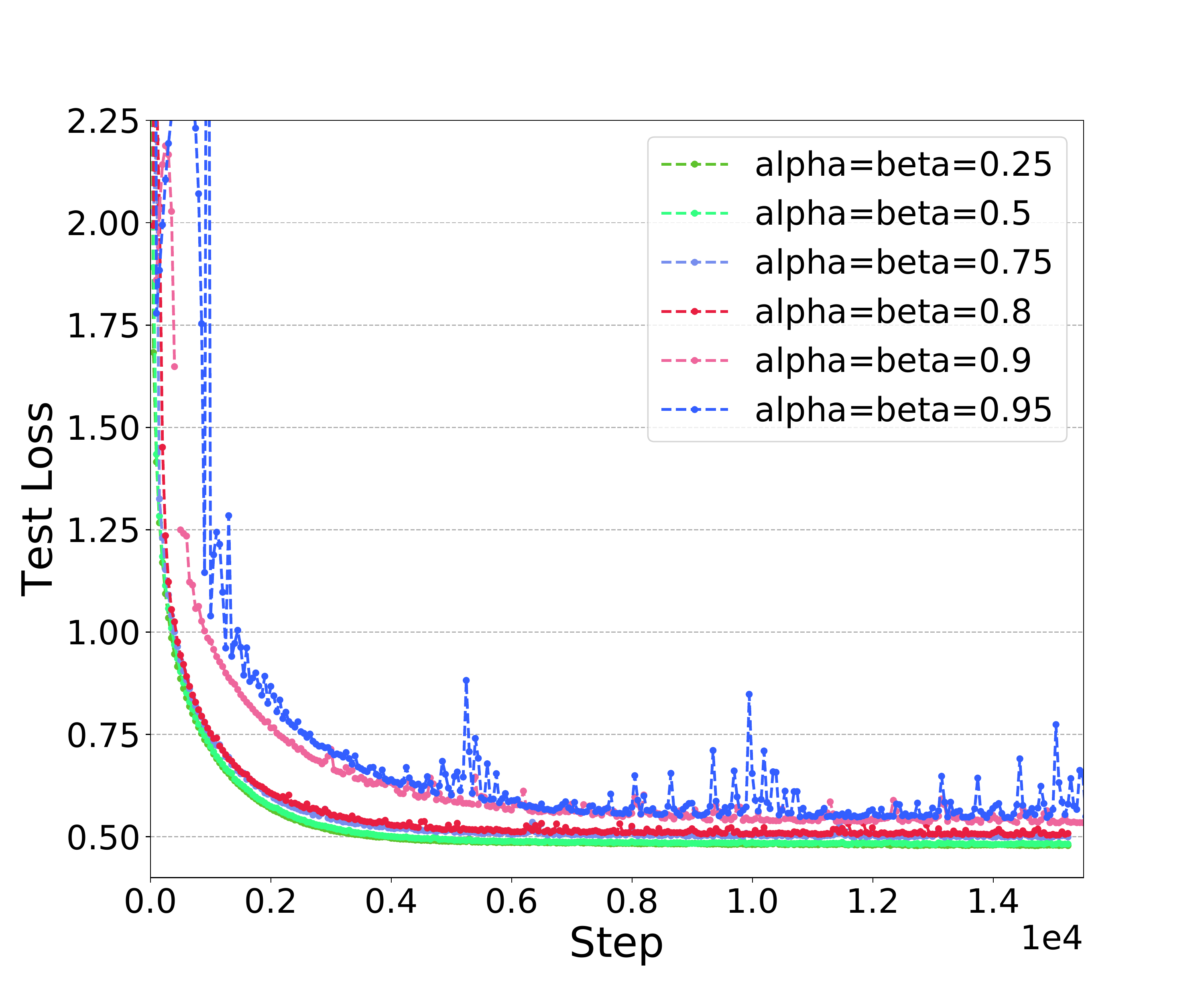}
    \caption*{(i): Test Loss of \ourapp{}}
  \end{minipage}
   \begin{minipage}[b]{0.33\linewidth}
  \centering
    \includegraphics[width=\linewidth]{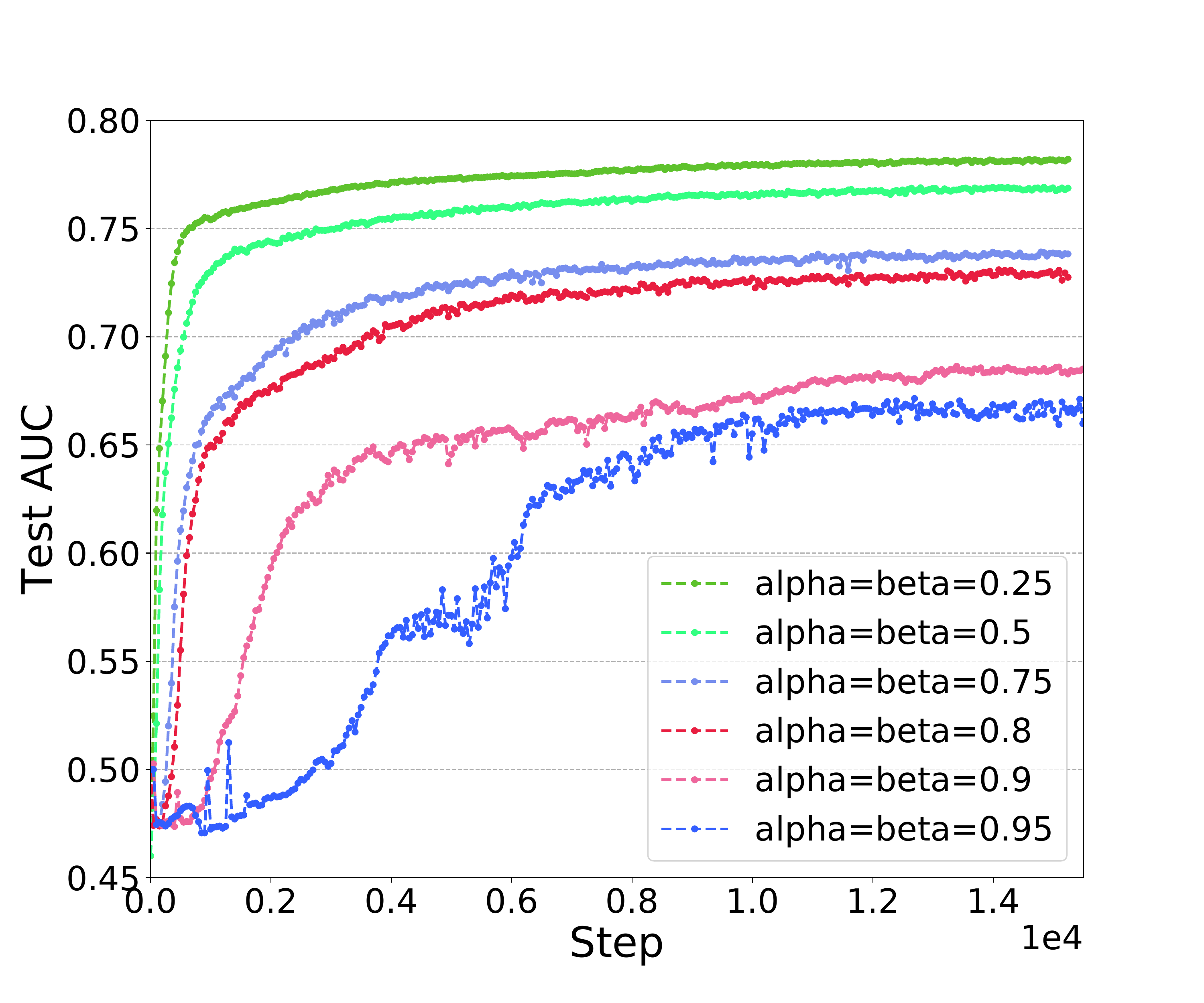}
      \caption*{ (j): Test AUC of \ourapp{}}
  \end{minipage} 
  
 \caption{Performance of different synthetic strategies on Criteo dataset. Performance is evaluated by test AUC and ACE. Figure (a) and (d): label synthetic strategies \synzerote{} and \synzerotr{} perform the best among all competitors. Figure (b) and (e): Even though \feaGaussian{} and \feaRandom{} can have comparable performance with \baselinepartial{}, both of them have no privacy guarantee. Figure (c) and (f): Each value in the x-axis represents a synthetic ratio for each party. $\alpha$ is the synthetic feature ratio for the passive party and $\beta$ is the synthetic label ratio for the active party. And $\alpha = \beta$.  Figure (h), (i), (j): convergence analysis of \ourapp{} with different $\alpha$ and $\beta$.
 }
 \label{fig:criteo_utility_privacy} 
\vspace{-0.15in}
\end{figure*}

\subsection{Simulations with Synthetic Features Only}

We conduct experiments to simulate the scenario that only the passive party has to generate synthetic features for unknown instances during training. The active party owns all the sample IDs and it uses real labels for all training instances. We provide other two strategies for the passive party to generate synthetic cut layer embeddings instead of raw features as comparison partners. 1) \texttt{\feaGaussian}: Inside a mini-batch, each synthetic cut layer embedding $\widetilde{f(x)}$ is generated from  {$\calN(0, {(s/d)\cdot \max_{i \in I_p}^B \norm{f(x_i)}_2^2}I_{d \times d})$}, where $B$ is the mini-bath size and $s$ is tuned as $1$. 2) \texttt{\feaRandom}: Each $\widetilde{f(x)}$ is generated from {$\calN(\mu, \sigma^2)$}, where $\mu$ and $\sigma$ are the current moving average of real cut layer embeddings' mean and standard deviation respectively. Following the similar setting as described in in Section \ref{sec:exp_synthetic_label_only}, we also provide two baselines: \texttt{\baselinepartial{}} and \texttt{\baselinefull{}} and two variant of \feaSampling{} with logits calibration: \texttt{\feaSamplingTe{}} and \texttt{\feaSamplingTr{}}. Details of the logits calibration can be found in Appendix \ref{sec:logits_calibration_synthetic_features}. 

We vary the synthetic feature ratio $\alpha$ to in our experiments. As shown in Figure \ref{fig:criteo_utility_privacy} (b) and (e), \feaGaussian{} and \feaRandom{} have comparable AUC with \baselinepartial{}. However, 
both of them have no privacy guarantee, since their attack AUC are close to $1.0$ as shown in Table \ref{tab:syn_fea_criteo_leak_auc}. In addition, \feaSamplingTe{} and \feaSamplingTr{} reduce the ACE significantly and have comparable ACE with \baselinepartial{}. Regarding AUC, \feaSamplingTr{} drops about $4.9\%$ and $0.77\%$ in comparing with \baselinepartial{} when $\alpha = 0.95$ and $\alpha = 0.5$ respectively. 
Meanwhile, the attack AUC of \feaSampling{} and its variants all fall in $[0.5, 0.507]$ (almost $0.5$), as shown in Table \ref{tab:syn_fea_criteo_leak_auc}. 



\subsection{Simulations with Both Synthetic Labels and Features}

We conduct experiments with a more realistic scenario: both parties provide synthetic data. We vary $\beta$ in the passive party and  $\alpha$ in the active party and compare our proposed approach \ourappvanilla{} and \ourapp{} with baselines (\baselinefull{} and \baselinepartial{}). Both \ourappvanilla{} and \ourapp{} leverage \synzero{} to generate synthetic labels and \feaSampling{} to generate synthetic features. The difference is that \ourapp{} leverages logits calibration to reduce ACE caused by synthetic features and labels, while \ourappvanilla{} only deals with calibration with \synzerote{}. Both parties may add dummy sample IDs to perturb the union of training samples. Hence, training instances with synthetic label and feature simultaneously are allowed in our experiments.

Due to space limit, we only report our experimental results with $\alpha = \beta$. We find the same conclusion under other settings. The real data ratio used to train the model is $(1-\alpha)(1-\beta)$. For example, when the synthetic ratio per party is $0.95$, \baselinepartial{} leverages $0.25\%$ data to train. As shown in Figure \ref{fig:criteo_utility_privacy} (c) and (f), \ourapp{}  and \ourappvanilla{} achieve similar AUC. However, \ourapp{} can reduce the ACE significantly in comparing with \baselinepartial{}. As shown in Table ~\ref{tab:our_approach_auc_drops}, \ourapp{} has a reasonable AUC drop in comparing with \baselinepartial{}. 

\begin{table}[t]\centering
\caption{Attack AUC of different synthetic feature generation strategies on Criteo.}\label{tab:syn_fea_criteo_leak_auc}
\scriptsize
\begin{tabular}{c|ccccc}\toprule
$\alpha$ = 0.5  &fea-Gaussian &fea-random &fea-sampling &fea-sampling-tr &fea-sampling-te \\\midrule
Attack AUC  &0.9766 &0.982 &0.5059 &0.5067 &0.5064 \\
\bottomrule
\end{tabular}
 \vspace{-0.15in}
\end{table}


Figure \ref{fig:criteo_utility_privacy} (h), (i), and (j) shows the convergence analysis for \ourapp{} with different $\alpha$ and $\beta$. Both training loss\footnote{The training loss is computed as the cross entropy with synthetic labels as the ground truth.} and test loss decease to stable points for all settings. 
Only when $\alpha = \beta = 0.95$, we see the test loss is fluctuating at some points. Another observation is that larger $\alpha$ and $\beta$ has slower convergence as we see that $\alpha = \beta = 0.95$ is the slowest one to achieve a stable test AUC.

\begin{table}[h]\centering
\caption{AUC change compared to baseline-partial on Criteo.}\label{tab:our_approach_auc_drops}
\scriptsize
\begin{tabular}{c|cccccccccc}\toprule
\textbf{ $\alpha = \beta$} &{0.05} &{0.1} &{0.25} &{0.5} &{0.6} &{0.75} &{0.8} &{0.9} &{0.95} \\\midrule
$\Delta$ AUC &-0.076\% &-0.165\% &-0.408\% &-1.259\% &-1.707\% &-3.081\% &-3.566\% &-4.924\% &-5.119\% \\
\bottomrule
\end{tabular}
\vspace{-0.1in}
\end{table}
\section{Related Work}

\paragraph{Vertical Federated Learning.} 
Vertical Federated Learning~\citep{vgsr18,gr18,akk+20,csm+20} considers the scenario where multiple parties own different attributes of the same entities.
Existing vFL protocols~\citep{yrzl19,cjsy20,wcxco20} are all based on identifying and then training the model on the intersection. Therefore it would unavoidably leak the intersection membership. Our work is also closely related to Split Neural Network (SplitNN)~\citep{vgsr18,gr18,akk+20,csm+20}, which is another framework that can be used in vFL scenario. 


\paragraph{Private Set Union.}
Despite extensive study on PSI~\citep{dt10,dkt10,hek12,hv17,prty19}, little is known on PSU until the work of Kissner and Song~\citep{ks05}.
Existing work on PSU employs advanced techniques in multi-party computation including additive homomorphic encryption~\citep{ks05,f07,dc17}, Bloom filter~\citep{dc17}, garbling circuits~\citep{ba16}, and oblivious transfer~\citep{ba16,krnw19}. Different from prior work, we compute the union of hash function of the ID sets so that the raw ID information is not used in the training procedure.
Our PSU protocol is based on the Diffie–Hellman key exchange scheme~\citep{dh76}, and is inspired by the work of Buddhavarapu et.al. \citep{bkm++}.
Compared to \citep{bkm++}, our protocol is conceptually cleaner because we decouple the union computation from identifying the hash mapping.



\section{Conclusion}
We propose a novel vFL framework that addresses the intersection membership leakage problem which currently prevents many privacy-sensitive organizations from adopting vFL. Our framework uses PSU to generate a union of all samples, which does not reveal intersection membership information to all parties. In addition, 
synthetic data generation strategies are proposed to handle samples belong to the union but not the intersection. We experimentally show that our method can protect the intersection membership without a  significant drop in model performance.


\paragraph{Limitations.} We point out two limitations in our framework. \textit{First}, compared to the  PSI approach, our PSU increases the training sample size from the intersection size ($|\mathcal{I}_p \cap \mathcal{I}_a|$) to the union size ($|\mathcal{I}_p \cup \mathcal{I}_a|$). Therefore, when the union is significantly larger than the intersection, our method would incur a non-negligible training cost compared to PSI. In practice, the larger party can choose to downsample its data to reduce training overhead. In addition, 
before the training starts, parties can estimate the additional training cost by measuring the size difference between intersection and union, and then choose between our framework and traditional PSI. \textit{Second},  synthetic label generation \synzero{} is specifically designed for unbalanced binary classification tasks such as online advertising and healthcare applications. We leave a more general strategy (other than \synpredprob{}) for balanced binary classification and multi-classification problems  as future work.




\bibliographystyle{abbrvnat}
\bibliography{references}


\clearpage
\appendix
\let\oldnl\nl
\newcommand{\nonl}{\renewcommand{\nl}{\let\nl\oldnl}}

\textbf{Appendix Outline:}

Section ~\ref{appendix:psu}: Security Analysis of Private Set Union

Section ~\ref{appendix:kl_proof}: Proof of Theorem \ref{theorem_min_kl}

Section ~\ref{sec:spectral_attack}: Spectral Attack

Section ~\ref{sec:training_serving}: Federated Model Training and Serving

Section ~\ref{sec:logits_shift_synthetic_label}: Logits Calibration for Synthetic Labels Only 

Section ~\ref{sec:logits_calibration_synthetic_features}: Logits Calibration for Synthetic Features Only

Section ~\ref{sec:ls_both}: Logits Calibration for Synthetic Both Features and Labels

Section ~\ref{sec:syn_label_pred_prob}: Synthetic Label Generation: \synpredprob

Section ~\ref{sec:exp_avazu}: Experimental Results on the Avazu Dataset

Section ~\ref{sec:data_setup_experimental_details}: Data Setup and Experimental Details

\clearpage

\section{Security Analysis of Private Set Union}\label{appendix:psu}
We first introduce some notations in multi-party computation adapted from~\cite{l17}.
The \textit{view} of a party $P$ (either $\actp$ or $\pasp$),
consists of the input of $P$ (either $\actid$ or $\pasid$),
the internal randomness used by $P$,
and all the messages $P$ receives during the MPC procedure.
Let $n=\max\{|\actid|,|\pasid|,\lceil \log p \rceil\}$ be the \textit{security parameter}, where $p$ is the prime number used in Algorithm \ref{alg:psu}.
A \textit{simulator} $S$ for $P$ is a probabilistic algorithm that takes the input of $P$  and $|\actid|, |\pasid|, |\actid\cup\pasid|$ as input,
runs in time polynomial in $n$,
and outputs a transcript that is indistinguishable to the view of $P$ for any polynomial time algorithm.

The idea behind simulation based proof is that for honest-but-curious malicious party,
all the information it could inspect is from its view.
Therefore,
if the view of malicious party is indistinguishable to the output of the simulator,
then any polynomial time malicious party cannot infer more information other than the input of the simulator.

Now we can state our formal result on the security of our PSU protocol.
We first present a simulator $S_{a}$ against $\actp$, which is described in Algorithm~\ref{alg:simulator_active}. The security guarantee is shown in Theorem \ref{thm:simulator_active}
\begin{algorithm}[!t]
\SetAlgoLined
\LinesNumbered
 \caption{Simulator $S_a$ against the active party $\actp$}\label{alg:simulator_active}
\KwIn{Prime $p=2q+1$, $\actid$, $|\pasid|$, $|\actid \cap \pasid|$.} 
\KwOut{View of $\actp$.}
\nonl \textbf{Simulate Initialization:}\\
\quad $S_a$ generates random $s_1,s_2,s_3\in \mathbb{Z}_q$.\\
\nonl \textbf{Simulate First Round Hashing:}\\
\quad $S_a$ faithfully computes $\actid^{s_1}$ as in Algorithm \ref{alg:psu}.\\
\quad $S_a$ randomly chooses $|\actid|$ elements $G\equiv\{g_1,g_2,\cdots,g_{|\actid|}\}$ from $QR(\mathbb{Z}_{p}^*)$.\\
\label{eq:first_round_hashing}\quad $S_a$ computes $\overline{\actid^{s_1t_1} }\equiv \{x^{s_1}|x\in G\}$, and randomly shuffles $\overline{\actid^{s_1t_1}}$.\label{eq:first_round_hashing}\\
\quad $S_a$ randomly choose $|\pasid|-|\actid \cap \pasid|$ elements $H$ from $QR(\mathbb{Z}_p^*)$.\\
\quad $S_a$ computes $\overline{\pasid^{t_1}}\equiv \{g_i\}_{i \in [|\actid\cap \pasid|]}\cup H$, and randomly shuffles $\overline{\pasid^{t_1}}$.\\
\quad $S_a$ faithfully computes $\overline{\pasid^{s_1t_1}}\equiv \{x^{s_1}|x\in \overline{\pasid^{t_1}}\}$.\\
\nonl \textbf{Simulate Second Round Hashing:}\\
\quad $S_a$ computes $\overline{(\actid\cup \pasid)^{s_1t_1}}$ by merging $\overline{\actid^{s_1t_1}}$ and $\overline{\pasid^{s_1t_1}}$.\\
\quad $S_a$ computes $\overline{(\actid\cup \pasid)^{s_1s_2s_3t_1}} \equiv \{x^{s_2s_3}|x\in \overline{(\actid\cup \pasid)^{s_1t_1}}\}$, and randomly shuffles it.\\
\quad $S_a$ randomly generates $m = |\actid\cup \pasid|$ elements $K\equiv \{k_1,\cdots,k_m\}$ from $QR(\mathbb{Z}_p^*)$.\\
\quad $S_a$ computes $\overline{(\actid\cup \pasid)^{s_1s_2s_3t_1t_2t_3}} \equiv \{x^{s_1s_2s_3}|x\in K\}$ and sorts it.\\
\textbf{Simulate Computing Private Hashing:}\\
\quad \For{$i=1,2,\cdots,|\actid|$}{
$S_a$ chooses $x$ as the $i$-th element in $\actid$.\\
$S_a$ faithfully computes $y=x^{s_2}$.\\
$S_a$ computes $\overline{z}=k_i^{s_2}$.
$S_a$ computes $\overline{x^{s_1s_2s_3t_1t_2t_3}}=(\overline{z})^{s_1s_3}$.
}
\quad \For{$i=1,2,\cdots,|\pasid|$}{
$S_a$ randomly chooses $\overline{x^{t_2}}$ from $QR(\mathbb{Z}_p^*)$.
}
\end{algorithm}
\begin{theorem}\label{thm:simulator_active}
Assume that the Decisional Diffie–Hellman problem is hard for $QR(\mathbb{Z}_p^*)$.
Then the distribution of the view of $\actp$ by running Algorithm \ref{alg:psu} is indistinguishable to the output of $S_a$ for any distinguisher that runs in time polynomial in $n$.
\end{theorem}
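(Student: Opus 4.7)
The plan is to prove the theorem by a hybrid argument, showing that the output of $S_a$ is computationally indistinguishable from the real view of $\actp$ under the DDH assumption on $QR(\mathbb{Z}_p^*)$. The first step is to catalog the messages that appear in $\actp$'s view: the shuffled sets $\actid^{s_1 t_1}$ and $\pasid^{t_1}$ received in the first round of hashing; the set $(\actid \cup \pasid)^{s_1 s_2 s_3 t_1 t_2 t_3}$ received in step 9 of the second round; and the per-element responses $x^{s_2 t_1 t_2 t_3}$ (for $x \in \actid$) and $x^{t_2}$ (for $x \in \pasid$) from the private hashing phase. Crucially, $\actp$ never learns the secret exponents $t_1, t_2, t_3$ held by $\pasp$, so every value carrying one of these exponents is a candidate for DDH-based replacement by a uniform element of $QR(\mathbb{Z}_p^*)$.

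Next I would build a chain of hybrids $H_0, H_1, \ldots, H_k$ with $H_0$ the real view and $H_k$ the output of $S_a$. Each hybrid replaces one family of values of the form $\{y^{t_i}\}$ by uniformly random group elements, while preserving the intersection pattern. For instance, $H_1$ replaces the implicit values $\{x^{t_1} : x \in \actid\}$ (which underlie $\actid^{s_1 t_1}$ once the known exponent $s_1$ is factored out) together with $\{x^{t_1} : x \in \pasid\}$ by a common pool of random elements, one per distinct ID in $\actid \cup \pasid$, so that the intersection elements receive the same random image on both sides. This matches the simulator's construction, which uses the first $|\actid \cap \pasid|$ elements of $G$ for the intersection and a fresh random set $H$ for the rest of $\overline{\pasid^{t_1}}$. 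Subsequent hybrids handle $t_2$ and $t_3$ in the second-round hashing and in the per-element responses by analogous substitutions.

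Indistinguishability between consecutive hybrids follows from a multi-instance version of DDH, which reduces in the usual way to single-instance DDH on $QR(\mathbb{Z}_p^*)$ via an inner hybrid over the polynomially many bases being re-exponentiated. The random shuffling performed in each round ensures that, modulo these DDH-based substitutions, the joint distribution of $\actp$'s view depends only on $|\actid|$, $|\pasid|$, and $|\actid \cap \pasid|$, which are exactly the inputs given to $S_a$.

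The main obstacle is bookkeeping: the substitutions in different parts of the view must remain mutually consistent. For example, the value $x^{s_2 t_1 t_2 t_3}$ delivered to $\actp$ in step 11 must, once raised to $s_1 s_3$, equal the corresponding element of $(\actid \cup \pasid)^{s_1 s_2 s_3 t_1 t_2 t_3}$ received in step 9. The simulator bakes this in by sampling a single set $K$ and deriving both messages from the same $k_i$, so the matching hybrid step must replace the step-9 element and the step-11 response for each $x \in \actid$ jointly under a single DDH hop whose hidden exponent is the composition $t_1 t_2 t_3$. The $x^{t_2}$ responses in step 12 are simpler: since the simulator outputs them as fresh uniform elements and the exponent $t_2$ appears elsewhere only in the combination $t_1 t_2 t_3$, a single DDH hop on $t_2$ replaces them en bloc. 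Once this consistency is arranged, each hop is a polynomial-time reduction to DDH and the overall distinguishing advantage is negligible in the security parameter $n$.
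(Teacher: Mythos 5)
Your proposal is correct and follows essentially the same route as the paper's proof: a hybrid argument that starts from the real view of $\actp$, successively replaces the messages originating from $\pasp$ (the values carrying the hidden exponents $t_1,t_2,t_3$) with random elements of $QR(\mathbb{Z}_p^*)$ while preserving the intersection pattern, and reduces the indistinguishability of neighboring hybrids to DDH. If anything, your bookkeeping of the consistency constraint between the step-9 set and the step-11 responses (both derived from the same $k_i$ in the simulator) is more explicit than the paper's sketch, which only works out one representative hybrid step from the first round of hashing.
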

\begin{proof}
The proof follows from~\cite{bkm++} and adapts a hybrid argument.
Notice that the output of $S_a$ is essentially replacing the messages from $\pasp$ with proper random elements from $\mathbb{Z}_q$.
Therefore we can build a sequence of hybrid views,
where we start with the view of $\actp$ by really executing Algorithm \ref{alg:psu},
and we end with the view that is the output of $S_a$.
Furthermore,
neighboring hybrid views only differ at one place.
We are going to argue that distinguishing neighboring views can be reduced to the DDH problem,
hence proves the security.

Let us see a concrete example. Assume that for two neighboring views $v_i$, $v_{i+1}$,
the only difference is that in Line \eqref{eq:first_round_hashing},
for one single $x\in \actid-\pasid$,
$v_i$ uses $x^{s_1t_1}$,
while $v_{i+1}$ uses $g^{s_1}$ for some $g\in G$.
We now construct a reduction from DDH to distinguish these two views.
Recall that in DDH,
we are given a tuple $(h^a,h^b,h^c)$ where $h$ is a generator of $QR(\mathbb{Z}_p^*)$,
and we need to decide if $c=ab$, or $c$ is a random number from $\mathbb{Z}_q$.
Then we construct an instance of PSU as follows:
we set $x^{s_1}=h^b$ and $t_1=a$.
Then in $v_i$,
the tuple $(h^{t_1},x^{s_1},x^{s_1t_1})$ is distributed as $(h^{a},h^b,h^{ab})$,
while in $v_2$, $(h^{t_1},x^{s_1},\overline{x^{s_1t_1}})$ is distributed as $(h^{a},h^b,g^{s_1})$.
Since $g$ is uniform random, $g^{s_1}$ is distributed as $h^{c}$ for random $c\in \mathbb{Z}_q$.
Hence  if some distinguisher can distinguish $v_i$ and $v_{i+1}$,
then it can be used to solve DDH.
\end{proof}
We next present a simulator $S_{p}$ against $\pasp$ in Algorithm \ref{alg:simulator_passive}.
Since Algorithm \ref{alg:psu} is highly symmetric,
the corresponding security proof is quite similar and is hence omitted.
The security guarantee is summarized in the following theorem:
\begin{theorem}
Assume that the Decisional Diffie–Hellman problem is hard for $QR(\mathbb{Z}_p^*)$.
Then the distribution of the view of $\pasp$ by running Algorithm \ref{alg:psu} is indistinguishable to the output of $S_p$ for any distinguisher that runs in time polynomial in $n$.
\end{theorem}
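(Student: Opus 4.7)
The plan is to mirror the proof of Theorem~\ref{thm:simulator_active}, exploiting the symmetry of Algorithm~\ref{alg:psu} under exchange of $\actp$ and $\pasp$. First I would make the simulator $S_p$ explicit (analogously to Algorithm~\ref{alg:simulator_active}): it takes as input $\pasid$ together with $|\actid|$ and $|\actid \cap \pasid|$, samples $t_1,t_2,t_3 \in \mathbb{Z}_q$ honestly, executes every step that $\pasp$ really performs on its own data, and replaces every message $\pasp$ would receive from $\actp$ (i.e., those obtained by exponentiating with the unknown $s_1,s_2,s_3$) by fresh uniform samples from $QR(\mathbb{Z}_p^*)$. The only subtlety is that for the $|\actid\cap\pasid|$ inputs common to both parties, the simulated messages in the first-round hashing must preserve the collision pattern that $\pasp$ would legitimately observe. $S_p$ enforces this by drawing one random element per intersection ID and reusing it in both the simulated $\actid^{s_1 t_1}$ it receives and in the image of its own $\pasid^{s_1 t_1}$, exactly as $S_a$ reuses elements of $G$ in Algorithm~\ref{alg:simulator_active}.

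Next I would set up a hybrid argument running from the true view $v_0$ of $\pasp$ to the simulated view $v_N$ output by $S_p$. Between neighbouring hybrids, exactly one message containing an unknown exponent is replaced by a uniform random element of $QR(\mathbb{Z}_p^*)$, subject to the collision constraints above. Each such step is bounded by a single DDH instance: given a challenge $(h^a,h^b,h^c)$, I embed $a$ in place of the relevant $s_i$ and $b$ in place of the preceding exponent of the input, and use $h^c$ as the message under dispute; if $c=ab$ the joint distribution matches $v_i$, while if $c$ is uniform in $\mathbb{Z}_q$ it matches $v_{i+1}$. Since the number of hybrids is polynomial in $n$, the standard hybrid lemma concludes that any polynomial-time distinguisher has negligible advantage.

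The main obstacle is the bookkeeping of the collision pattern for elements of $\actid \cap \pasid$. In the first-round hashing, $\pasp$ receives $\actid^{s_1 t_1}$ from $\actp$ and separately computes $\pasid^{t_1}$ and $\pasid^{s_1 t_1}$ from its own inputs; for every $x\in \actid\cap \pasid$, $\pasp$ can verify that $x^{s_1 t_1}$ appears in both multisets. The simulator therefore cannot independently re-randomize these entries but must draw them once and splice them consistently into both locations. Similarly, in the private-hashing phase, whenever $x\in\pasid$ arose earlier as a collision, its eventual image $x^{s_1 s_2 s_3 t_1 t_2 t_3}$ must be consistent with the corresponding entry of the sorted UID set $U$, so the random elements planted during the second-round hashing have to be chosen in advance and then threaded through the final private-hashing messages $x^{s_1 s_2 s_3 t_2}$. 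Once this consistency bookkeeping is in place, the remainder of the argument is a routine instantiation of the DDH reduction already carried out for Theorem~\ref{thm:simulator_active}, with the roles of $(s_i)$ and $(t_i)$ swapped throughout.
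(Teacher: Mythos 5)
Your proposal is correct and follows essentially the same route as the paper, which defines $S_p$ explicitly (Algorithm~\ref{alg:simulator_passive}) but omits the proof by appealing to the symmetry of Algorithm~\ref{alg:psu} with the active-party case; your hybrid argument with the roles of $(s_i)$ and $(t_i)$ swapped, the per-step DDH reduction, and the careful reuse of the random elements $G$ and $K$ to preserve the collision pattern on $\actid\cap\pasid$ is exactly the intended expansion of that symmetric argument.
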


\begin{algorithm}[!t]
\SetAlgoLined
\LinesNumbered
 \caption{Simulator $S_p$ against the passive party $\pasp$}\label{alg:simulator_passive}
\KwIn{Prime $p=2q+1$, $\pasid$, $|\actid|$,  $|\actid \cap \pasid|$.} 
\KwOut{View of $\pasp$.}
\nonl \textbf{Simulate Initialization:}\\
\quad $S_p$ generates random $t_1,t_2,t_3\in \mathbb{Z}_q$.\\
\nonl \textbf{Simulate First Round Hashing:}\\
\quad $S_p$ randomly chooses $|\actid|$ elements $G\equiv\{g_1,g_2,\cdots,g_{|\actid|}\}$ from $QR(\mathbb{Z}_{p}^*)$.\\
\quad $S_p$ computes $\overline{\actid^{s_1} }\equiv \{x|x\in G\}$, and randomly shuffles $\overline{\actid^{s_1}}$.\\
\quad $S_p$ faithfully computes $\overline{\actid^{s_1t_1}} \equiv \{x^{t_1}|x\in \overline{\actid^{s_1}}\}$ and randomly shuffles it.\\
\quad $S_p$ randomly choose $|\pasid|-|\actid \cap \pasid|$ elements $H$ from $QR(\mathbb{Z}_p^*)$.\\
\quad $S_p$ faithfully computes $\pasid^{t_1}$ as in Algorithm \ref{alg:psu}.\\
\quad $S_p$ computes $\overline{\pasid^{s_1t_1}}\equiv \{g_i^{t_1}\}_{i\in [|\actid\cap\pasid|]}\cup H$ and randomly shuffles it.\\
\nonl \textbf{Simulate Second Round Hashing:}\\
\quad $S_p$ randomly generates $m = |\actid\cup \pasid|$ elements $K\equiv \{k_1,\cdots,k_m\}$ from $QR(\mathbb{Z}_p^*)$.\\
\quad $S_p$ computes $\overline{(\actid\cup \pasid)^{s_1s_2s_3t_1}} \equiv \{x^{t_1}|x\in K\}$, and randomly shuffles it.\\
\quad $S_p$ computes $\overline{(\actid\cup \pasid)^{s_1s_2s_3t_1t_2t_3}} \equiv \{x^{t_2t_3}|x\in \overline{(\actid\cup \pasid)^{s_1s_2s_3t_1}}\}$ and sorts it.\\
\textbf{Simulate Computing Private Hashing:}\\
\quad \For{$i=1,2,\cdots,|\actid|$}{
$S_p$ randomly chooses $\overline{x^{s_2}}$ from $QR(\mathbb{Z}_p^*)$.
}
\quad \For{$i=1,2,\cdots,|\pasid|$}{
$S_p$ chooses $x$ as the $i$-th element in $\pasid$.\\
$S_p$ faithfully computes $y=x^{t_2}$.\\
$S_p$ computes $\overline{z}=k_i^{t_2}$.
$S_p$ computes $\overline{x^{s_1s_2s_3t_1t_2t_3}}=(\overline{z})^{t_1t_3}$.
}
\end{algorithm}

\section{Proof of Theorem \ref{theorem_min_kl}
}\label{appendix:kl_proof}
In this section we present the proof of Theorem \ref{theorem_min_kl}.
\begin{proof}
From the definition of KL-divergence, we have
\begingroup
\allowdisplaybreaks
\begin{align*}
  \arg\min_{D'} \text{KL}(D||D')= & ~\arg\min_{p\in \Delta_X} \int_{x,y}D(x,y)\log \frac{D(x, y)}{p(x)q(y)} dxdy\\
  = & ~\arg\max_{p\in \Delta_X} \int_{x,y}D(x,y)\log p(x)q(y) dx dy\\
  = &~\arg\max_{p\in\Delta_X}\left(\int_{x,y} D(x,y)\log p(x) dx dy+\int_{x,y} D(x,y)\log q(y) dx dy\right)\\
  = &~ \arg\max_{p\in\Delta_X}\left(\int_{x} \left(\int_yD(x,y))\log p(x) dy\right)dx\right) \\  = & ~\arg\min_{p\in\Delta_X}-\int_{x} p^* (x)\log p(x) dx\\  = &~\arg\min_{p\in\Delta_X}\int_{x} p^* (x)\log \frac{p^* (x)}{p(x)} dx\\ 
  = & ~\arg\min_{p\in\Delta_X} \text{KL}(p^* (x)||p(x))  
\end{align*}
where the fourth line follows from Fubini's theorem.
Then the claimed result follows from the well known fact that KL-divergence is non-negative and is 0 when $p=p^*$.
\end{proof}
\endgroup

\section{Spectral Attack}\label{sec:spectral_attack}
In this section we present the theoretical background of the spectral attack.
Spectral attack is a singular value decomposition (SVD) based outlier detection method introduced by Tran, Li and Madry~\cite{tlm18}.
In particular,
they show that
\begin{lemma}[Lemma 3.1, Definition 3.1 in \cite{tlm18}]\label{lem:spectral_attack}
Fix $0<\epsilon<\frac{1}{2}$.
Let $D$, $W$ be two distributions over $\mathbb{R}^d$ with mean $\mu_D,\mu_W$ and covariance matrices $\Sigma_D,\Sigma_W$.
Let $F$ be a mixture distribution given by $F=(1-\epsilon)D+\epsilon W$.
If $\|\mu_D-\mu_W\|_2^2\geq \frac{6\sigma^2}{\epsilon}$,
then the following statement holds:
let $\mu_F$ be the mean of $F$ and $v$ be the top singular vector of the covariance matrix of $F$,
then there exists $t$>0 so that 
\begin{align*}
    \Pr_{X\sim D}[|\langle X-\mu_F,v\rangle |>t] < & ~\epsilon,\\
    \Pr_{X\sim W}[|\langle X-\mu_F,v\rangle |<t]< & ~\epsilon.
\end{align*}
\end{lemma}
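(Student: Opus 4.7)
The plan is to exploit the fact that the mean separation $\mu_D-\mu_W$ contributes a rank-one term to the covariance of $F$ whose operator norm dominates the intrinsic noise in $D$ and $W$; the top singular vector $v$ of $\mathrm{Cov}(F)$ therefore lines up approximately with the direction $\mu_D-\mu_W$, and a one-dimensional Chebyshev analysis of the projection onto $v$ finishes the argument.

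First I would compute $\mu_F=(1-\epsilon)\mu_D+\epsilon\mu_W$ and expand $\mathbb{E}_F[XX^\top]-\mu_F\mu_F^\top$ to obtain $\mathrm{Cov}(F)=(1-\epsilon)\Sigma_D+\epsilon\Sigma_W+\epsilon(1-\epsilon)(\mu_D-\mu_W)(\mu_D-\mu_W)^\top$. Using the hypothesis together with $\epsilon<1/2$, the rank-one term has operator norm $\epsilon(1-\epsilon)\|\mu_D-\mu_W\|_2^2\geq 6(1-\epsilon)\sigma^2\geq 3\sigma^2$, while the "noise'' part has operator norm at most $\sigma^2$ (here I am reading $\sigma^2$ as an upper bound on $\|\Sigma_D\|_{op}$ and $\|\Sigma_W\|_{op}$, which is the convention in \cite{tlm18}). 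Thus the top singular direction of $\mathrm{Cov}(F)$ is driven by the rank-one perturbation.

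Second, I would invoke a spectral perturbation bound. Weyl's inequality controls the top singular value, but to control the top singular \emph{vector} I would use a Davis--Kahan / $\sin\theta$ bound, or alternatively exploit the rank-one structure directly, to conclude that $\langle v,u\rangle^2$ is bounded below by a constant, where $u:=(\mu_D-\mu_W)/\|\mu_D-\mu_W\|_2$. In particular $|\langle v,\mu_D-\mu_W\rangle|$ remains a constant fraction of $\|\mu_D-\mu_W\|_2$, hence at least a constant multiple of $\sqrt{6\sigma^2/\epsilon}$. Then I would reduce to a scalar analysis: set $Y:=\langle X-\mu_F,v\rangle$, observe that under $X\sim D$ we have $\mathbb{E}[Y]=\epsilon\langle \mu_D-\mu_W,v\rangle$ and $\mathrm{Var}(Y)=v^\top\Sigma_D v\leq\sigma^2$, while under $X\sim W$ we have $\mathbb{E}[Y]=-(1-\epsilon)\langle\mu_D-\mu_W,v\rangle$ and $\mathrm{Var}(Y)\leq\sigma^2$. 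Since the two centers lie on opposite sides of $0$ and their separation is comparable to $\|\mu_D-\mu_W\|_2$, I would pick $t$ to lie between the two centers in absolute value and apply Chebyshev: $\Pr[|Y-\mathbb{E}[Y]|>s]\leq \sigma^2/s^2$. The constant $6$ in the hypothesis $\|\mu_D-\mu_W\|_2^2\geq 6\sigma^2/\epsilon$ is tuned so that the available gap yields both tail probabilities below $\epsilon$ simultaneously.

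The main obstacle is the middle step: tracking $\langle v,u\rangle$ sharply enough that the projected mean-gap $|\langle\mu_D-\mu_W,v\rangle|$ loses only a controlled constant factor compared to $\|\mu_D-\mu_W\|_2$. A crude Weyl's-only argument bounds eigenvalues but not eigenvectors, so a $\sin\theta$ estimate (or an explicit rank-one computation) is necessary, and the numerical constants it produces must be compatible with the factor $6$ in the hypothesis. A secondary nuisance is ensuring $v^\top\Sigma_D v\leq\sigma^2$ uses the same $\sigma^2$ as in the hypothesis; if $\sigma^2$ is only a coarse upper bound on $\|\Sigma_D\|_{op}$ and $\|\Sigma_W\|_{op}$, then any multiplicative slack must be absorbed into the choice of $t$ before the final Chebyshev step.
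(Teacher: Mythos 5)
The paper itself contains no proof of this lemma: it is quoted verbatim (with attribution) from Tran, Li and Madry \cite{tlm18}, so there is no in-paper argument to compare against, and your outline is essentially a reconstruction of the source's proof. The pieces you have are the right ones. The decomposition $\Sigma_F=(1-\epsilon)\Sigma_D+\epsilon\Sigma_W+\epsilon(1-\epsilon)\Delta\Delta^{\top}$ with $\Delta:=\mu_D-\mu_W$ is correct, and you are right to flag that $\sigma$ is undefined in the statement as the paper reproduces it; reading $\sigma^2$ as a bound on $\|\Sigma_D\|_{\mathrm{op}}$ and $\|\Sigma_W\|_{\mathrm{op}}$ is the only sensible interpretation. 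For the alignment step you do not need Davis--Kahan: the sandwich $\epsilon(1-\epsilon)\|\Delta\|_2^2\le\lambda_{\max}(\Sigma_F)=v^{\top}\Sigma_F v\le\sigma^2+\epsilon(1-\epsilon)\langle v,\Delta\rangle^2$, combined with $\lambda_{\max}(\Sigma_F)\ge 3\sigma^2$, gives $\langle v,\Delta\rangle^2\ge\tfrac{2}{3}\|\Delta\|_2^2$ directly and with explicit constants.

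The genuine gap is in the last step, exactly where you write that you would ``pick $t$ to lie between the two centers in absolute value.'' The test statistic is $|\langle X-\mu_F,v\rangle|$, so the relevant separation is not the distance $|\delta|$ between the projected centers $\epsilon\delta$ and $-(1-\epsilon)\delta$ (where $\delta:=\langle\Delta,v\rangle$), but the gap between their \emph{absolute values}, namely $(1-2\epsilon)|\delta|$. A threshold certified by Chebyshev must satisfy $t-\epsilon|\delta|>\sigma/\sqrt{\epsilon}$ and $(1-\epsilon)|\delta|-t>\sigma/\sqrt{\epsilon}$, hence $(1-2\epsilon)|\delta|>2\sigma/\sqrt{\epsilon}$. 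The guaranteed bound $|\delta|\ge\sqrt{2/3}\,\|\Delta\|_2\ge 2\sigma/\sqrt{\epsilon}$ then demands $1-2\epsilon\ge 1$, and even in the most favorable case $|\delta|=\|\Delta\|_2=\sqrt{6}\,\sigma/\sqrt{\epsilon}$ one needs $1-2\epsilon\ge 2/\sqrt{6}$, i.e.\ $\epsilon\lesssim 0.09$. More generally a hypothesis $\|\Delta\|_2^2\ge C\sigma^2/\epsilon$ forces $C\ge 2+4/(1-2\epsilon)^2$, so no fixed constant covers the whole range $\epsilon<1/2$; your claim that the constant $6$ ``is tuned so that the available gap yields both tail probabilities below $\epsilon$ simultaneously'' is the precise point that fails. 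This is not a removable slack: at $\epsilon=1/2$ with two isotropic Gaussians at the stipulated distance, the conditional laws of $|\langle X-\mu_F,v\rangle|$ under $D$ and $W$ coincide and the two required events are complementary, so their probabilities sum to $1$ and cannot both be below $1/2$. A correct write-up must either restrict $\epsilon$ to be bounded away from $1/2$ or strengthen the mean-separation hypothesis by a factor of order $(1-2\epsilon)^{-2}$; this looseness is inherited from the source the paper cites, but your proof as proposed would not compile into a valid argument for the statement as written.
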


This gives us one attack method to distinguish the distribution $D$ of real data between the distribution $D'$ of synthetic data.
Lemma \ref{lem:spectral_attack} suggests that if the mean of $D$ and $D'$ are far away from each other,
then we can use $|\langle X-\mu_F,v\rangle |$ as the indicator to distinguish $D$ and $D'$.

In experiments,
for each mini-batch,
we compute the attack AUC for the passive party $\pasp$  when the synthetic label ratio $\beta>0$ and for  the active party $\actp$ when the synthetic feature ratio $\alpha>0$.
For $\pasp$,
we consider the distribution over the gradients $g$ sent back by $\actp$.
For $\actp$,
we consider the joint distribution over the intermediate embeddings $f$ sent from $\pasp$ and the labels $y$.
We estimate $\mu_F$ and $v$ by computing the empirical mean and covariance matrix with data in the mini-batch.
Then we are able to compute the score $|\langle X-\mu_F,v\rangle |$ for each sample in the batch.
Finally,
we compute the attack AUC as how these scores predict if each sample is synthetic or not.

\section{Federated Model Training and Serving}
\label{sec:training_serving}

After we leverage PSU to align the dataset and provide synthetic data using strategies as described in previous subsection, we then introduce how to do the federated model training and serving. In this paper we focus on two parties learning a model for a binary classification problem over the domain $\calX \times \cbrck{0, 1}$. Here the passive and active parties want to learn a composition model $h \circ f$ jointly, where the raw features $X$ and $f: \calX \to \Real^d$ are stored on the passive party side while the labels $y$ and $h: \Real^d \to \Real$ is on the active party side. Let $\ell = h(f(X))$ be the logit of the positive class where the positive class's predicted probability is given by the sigmoid function. The loss of the model is given by the cross entropy. 

To train the model using gradient descent, the passive party  computes $f(X)$ with both synthetic and real raw features $X$ and sends it to the active party who will then complete the rest of computation (Forward in Table \ref{tab:model_training}).\footnote{
For the simplifying of the notation and derivation, we add no additional features in the active party to compute the logit. 
The data generation strategies can be adapted for other complicated settings.
}  Then the active party starts the gradient computation process by first computing the gradient of the loss with respect to the logit $\frac{dL}{d\ell} = (\overline{p}_1 - y)$. Here $y$ contains both synthetic and real labels. Using the chain rule, the active party can then compute the gradient of $L$ with respect to $h$'s parameters through $\ell$. In order to allow the passive party to learn $f$, the active party also computes the gradient of $L$ with respect to the input of the function $h$. We denote this gradient by $g$  (equality by chain rule). After receiving $g$ sent from the active party, the passive party can compute the gradient of $L$ w.r.t. $f$'s parameters (Backward in Table \ref{tab:model_training}). 

  \begin{table*}[!ht]
  \centering
    \begin{tabular}{ cccccccccc }
      { Forward:} & $X$ & $\xrightarrow{\bm{f}}$ & $\bm{f}(X)$ & $\stackrel{\textrm{\scriptsize comm.}}{\Longrightarrow}$ & $\bm{f}(X)$ & $\xrightarrow{\bm{h}}$ &$\ell=h(f(X))$ & $\rightarrow$ & $\stackrel{\stackrel{\mbox{$y$}}{\downarrow}}{L}$ \\
      { Backward:} & & & $g$ & $\stackrel{\textrm{\scriptsize comm.}}{\Longleftarrow}$ & $g \coloneqq \nabla_{\bff(X)}L $ & $\leftarrow$ & $\frac{dL}{dl} = (\overline{p}_1 - y)$ & $\leftarrow$ & $1$ \\
      & & & $\downarrow$ & & & & $\downarrow$ & & \\
      & & & $f$'s param & & & & $h$'s param & & 
    \end{tabular}
    \caption{Communication diagram of model training ($\leftarrow$ and $\downarrow$ represent gradient computation using the chain rule).}
    \label{tab:model_training}
    \vspace{-0.15in}
  \end{table*}

When $B$ examples are forwarded as a batch, the communicated features $f(X)$ and gradients $g$ will both be matrices of shape $\Real^{B \times d}$ with each row belonging to a specific example in the batch. It is important to note that here the gradients as rows of the matrix are gradients of the loss with respect to different examples' intermediate computation results but not the model parameters; therefore, no averaging over or shuffling of the rows of the matrix can be done prior to communication for the sake of correct gradient computation of $f$'s parameters on the passive party side.

For model inference,
suppose that the passive party would like to know the prediction of instance $i$ from the active party. The passive party only feeds $i$'s raw feature $x$ and computes $f(x)$. 
If no additional features are added in the active party to compute the logit,
then the active party directly computes $p_i = 1/(1 + \exp(-h(f(x))))$ and sends $p_i$ to the passive party without knowing any ID information of $i$. 
Otherwise if additional features are needed for active party, the passive party can hide instance $i$ in a batch which both parties have agreements on. Then the serving process will be the as same as the forward process during the training phase. After receiving a batch of predictions from the active party, the passive party can select the corresponding prediction for $i$.

\section{Logits Calibration for Synthetic Labels Only}
\label{sec:logits_shift_synthetic_label}


Here we present the logits shift method inspired by~\cite{Chapelle2015ads}.
In this section we consider the scenario
where only the active party needs to provide synthetic labels,
and the synthetic labels are generated by \synzero{}.
Let $D$ be the ground truth distribution over $\mathcal{X}\times \mathcal{Y}$.
Let $D'$ be the distribution of data after introducing synthetic data.
Let $\delta^a_{id} = \mathbf{1}_{id\in \actid}$,
and $p_a$ be the probability of $id\in \actid$.
We make the assumption that the underlying ground-truth label distribution is independent on whether $id\in \actid$.
Then we have
\begin{equation}\label{eq:logits_shift_label}
   \begin{aligned}
    &~D'(y=1|X=x)\\
    = & ~\sum_{\delta^a_{id}=0}^{1}\Pr[\delta^a_{id} ] D'(y=1|X=x,\delta^a_{id})\\
    = & ~\Pr[\delta^a_{id}=1] D'(y=1|X=x,\delta^a_{id}=1 )+
    \Pr[\delta^a_{id}=0 ] D'(y=1|X=x,\delta^a_{id}=0)\\
    = & ~p_a D(y=1|X=x).
\end{aligned} 
\end{equation}








There are two ways to interpret Eq. \eqref{eq:logits_shift_label}:
\begin{itemize}
    \item \synzerote{}: We think of $D'$ as the model output,
    and do not modify the training phase.
    We compute $D$ according to Eq.~\eqref{eq:logits_shift_label} to evaluate on the test dataset.
    \item \synzerotr{}: We think of $D$ as the model output,
    and use it to evaluate in the test phase.
    In the training phase,
    we compute $D'$ according to Eq.~\eqref{eq:logits_shift_label} to compute the loss and corresponding model updates.
\end{itemize}

Empirically we found both logits shift can reduce the ACE significantly in comparing with the vanilla \synzero{} without any logits shift. 

\section{Logits Calibration for Synthetic Features Only}
\label{sec:logits_calibration_synthetic_features}







In this section we consider the scenario
where only the passive party needs to provide synthetic features,
and the synthetic features are generated by \feaSampling.
Let $D$ be the ground truth distribution over $\mathcal{X}\times \mathcal{Y}$.
Let $D'$ be the distribution of data after introducing synthetic data.
Let $\delta^p_{id} = \mathbf{1}_{id\in \pasid}$,
and $p_p$ be the probability of $id\in \pasid$.
We make the assumption that the underlying ground-truth label distribution is independent on whether $id\in \pasid$.
Then we have
\begin{equation}\label{eq:logits_shift_features}
   \begin{aligned}
    &~D'(y=1|X=x)\\
    = & ~\sum_{\delta^p_{id}=0}^{1}\Pr[\delta^p_{id} ] D'(y=1|X=x,\delta^p_{id})\\
    = & ~\Pr[\delta^p_{id}=1] D'(y=1|X=x,\delta^p_{id}=1 )+
    \Pr[\delta^p_{id}=0 ] D'(y=1|X=x,\delta^p_{id}=0)\\
    \\
    = & ~p_p D(y=1 | X =x) + (1-p_p) D(y=1).
\end{aligned} 
\end{equation}

In the experiment, $p_p$ is known by both parties, and $D(y=1)$ is estimated by the marginal distribution of true labels owned by $\actp$.
Similarly,
there are two ways to interpret Eq. \eqref{eq:logits_shift_features}:
\begin{itemize}
    \item \feaSamplingTe: We think of $D'$ as the model output,
    and do not modify the training phase.
    We compute $D$ according to Eq.~\eqref{eq:logits_shift_features} to evaluate on the test dataset.
    \item \feaSamplingTr: We think of $D$ as the model output,
    and use it to evaluate in the test phase.
    In the training phase,
    we compute $D'$ according to Eq.~\eqref{eq:logits_shift_features} to compute the loss and corresponding model updates.
\end{itemize}


\section{Logits Calibration for Synthetic Both Features and Labels} \label{sec:ls_both}
Here we present a toy example of our logits shift method.
Consider $\actid$ and $\pasid$ are constructed as follows: we start from a superset $\mathcal{I}$, and for each $id \in \mathcal{I}$, with probability $p_a$ we place $id$ into $\actid$, and with probability $p_p$ we place $id$ into $\pasid$ independently. 
Finally we run over vFL protocol by using $\mathcal{I}$ for scheduling.
$\actp$ uses fill-major strategy to generate synthetic labels, and $\pasp$ uses raw-sampling strategy to generate synthetic features.

Let $D$ be the ground truth distribution over $\mathcal{X}\times \mathcal{Y}$.
Let $D'$ be the distribution of data after introducing synthetic data.
Let $\delta^a_{id} = \mathbf{1}_{id\in \actid}$ 
and $\delta^p_{id} = \mathbf{1}_{id\in \pasid}$.
Then we have
\begin{equation}\label{eq:logits_shift}
   \begin{aligned}
    D'(y=1|X=x)
    = & ~\sum_{\delta^a_{id}=0}^{1}\sum_{\delta^p_{id}=0}^{1}\Pr[\delta^a_{id} ,\delta^p_{id} ] D'(y=1|X=x,\delta^a_{id} ,\delta^p_{id} )\\
    = & ~\Pr[\delta^a_{id}=1,\delta^p_{id}=1 ] D'(y=1|X=x,\delta^a_{id}=1 ,\delta^p_{id}=1 )\\&+
    \Pr[\delta^a_{id}=1 ,\delta^p_{id}=0 ] D'(y=1|X=x,\delta^a_{id}=0 ,\delta^p_{id}=0 )\\
    = & ~p_a p_p D(y=1|X=x)+p_a(1-p_p)D(y=1).
\end{aligned} 
\end{equation}

There are two ways to apply Eq.~\eqref{eq:logits_shift}.
We can either think of our model computing $D'$,
then on test data evaluation, we need to compute $D$ from $D'$;
or we can think of our model computing $D$,
then in the training process we need to compute $D'$ from $D$ and use $D'$ to compute the loss and update the model parameters.

\section{Synthetic Label Generation: \synpredprob}
\label{sec:syn_label_pred_prob}

In this section
we show the theoretical analysis of \synpredprob{}.
It is inspired by the so-called ``log-derivative trick'',
and the goal is to mitigate the effect of synthetic labels on the model updates.
Notice that in binary classification problems,
$\forall x\in \mathcal{X}$,
for any prediction model $p$,
we always have $\sum_{y_i=0}^{1}p(y_i|x)=1$.
Let $W$ be the weight on the cut layer,
and $(x_1,y_1),(x_2,y_2),\cdots$ be the training samples in a mini-batch.
We can split the indices into two parts $I_1$ and $I_2$,
where $I_1$ are the indices that $\actp$ has the labels,
and $I_2$ are the rest indices.
Since $\actp$ only needs to generate synthetic labels for indices in $I_2$,
we have
\begin{align*}
    \sum_{i\in I_2} \frac{\partial \sum_{y_i=0}^{1} p(y_i|x_i)}{\partial W}
    = & ~\sum_{i\in I_2}\sum_{y_i=0}^{1} \frac{\partial  p(y_i|x_i)}{\partial W}\\
    = & ~\sum_{i\in I_2}\sum_{y_i=0}^{1} p(y_i|x_i)\cdot \frac{1}{p(y_i|x_i)}\frac{\partial  p(y_i|x_i)}{\partial W}\\
    = & ~\sum_{i\in I_2}\sum_{y_i=0}^{1} p(y_i|x_i)\frac{\partial  \log p(y_i|x_i)}{\partial W}.
\end{align*}
On the other hand since $\sum_{y_i=0}^{1} p(y_i|x_i)=1$,
we have $\sum_{i\in I_2} \frac{\partial \sum_{y_i=0}^{1} p(y_i|x_i)}{\partial W}=0$.
This means if we sample synthetic label $y_i$ according to the distribution $q_{x_i}:=p(\cdot|x_i)$,
then
\begin{align*}
    \sum_{y_i=0}^{1} q_{x_i}(y_i)\frac{\partial  \log p(y_i|x_i)}{\partial W}=0.
\end{align*}
Notice that $\sum_{y_i=0}^{1 }q_{x_i}(y_i)\log p(y_i|x_i)$ can be interpreted as the expectation of the cross-entropy loss if we generate synthetic labels according to $q_{x_i}$.
This explains the intuition of \synpredprob{}:
we simply generate the synthetic label $y_i$ according to the model prediction $p(y_i|x_i)$.
If $q$ and $W$ are independent,
then the expected gradient with respect to the cross-entropy loss is exactly $0$,
namely the synthetic labels do not affect model updates on average.





\synpredprob{} performs better than \synone{}, \synknn{}, and \synposratio{}. 
However,
the variance could be large when we only sample once to generate the label.
To reduce the variance, we can increase the sample times and average the gradients generated from each sampled label. As a result, we can have better performance at the cost of worse privacy,
because the gradients corresponding to synthetic labels tend to be close to 0. 
Such a trade-off can be observed in Table~\ref{tab:label_predict}.
This experiment is conducted on the Avazu dataset with 2 epochs of training.


\begin{table}[!htp]\centering
\caption{ AUC and Attack AUC of \synpredprob{} on Avazu with $\beta = 0.5$. }\label{tab:label_predict}
\scriptsize
\begin{tabular}{c|ccccccccccc}\toprule
 Sample Times &1 &2 &3 &4 &5 &6 &7&8 &9 &10 \\ \midrule
 AUC & 0.7465&0.7482 &0.7482 &0.7486 & 0.7490 &0.7491&0.7489&0.7492 &0.7492 & 0.7493\\ \midrule
Attack AUC &0.5001 &0.5227 &0.5812 &0.6249  &0.6512 &0.6753&0.7041&0.7175 &0.7299&0.7438\\
\bottomrule
\end{tabular}
\end{table}
One advantage of \synpredprob{} is that it can not only be applied to both balanced and unbalanced binary classification problems but also multi-classification problems.

\section{Experimental Results on the Avazu Dataset}\label{sec:exp_avazu}
\begin{figure*}[ht!]
\vspace{-0.1in}
  \begin{minipage}[b]{0.33\linewidth}
  \centering
    \includegraphics[width=\linewidth]{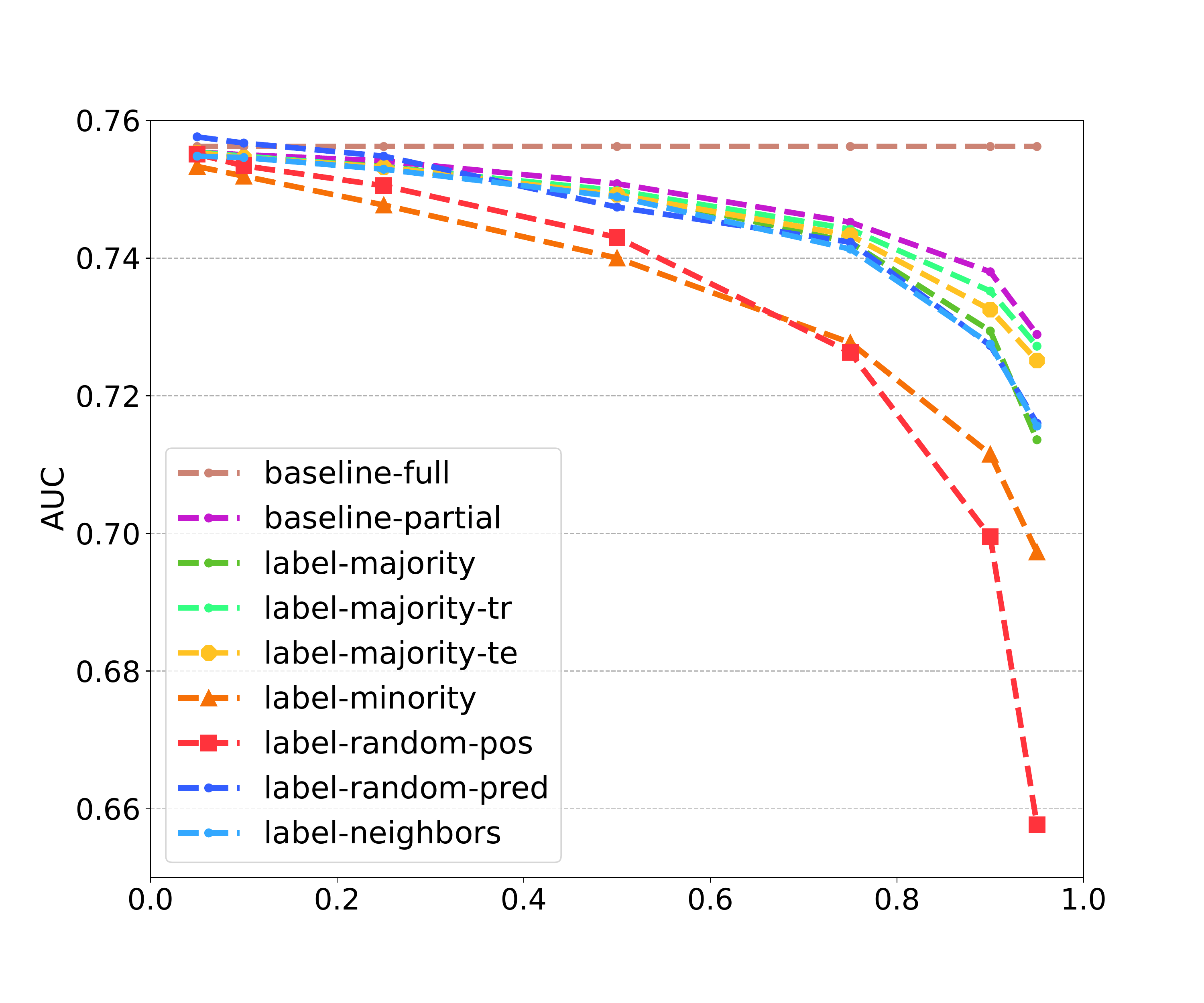}
      \caption*{ (a): Synthetic Label AUC}
  \end{minipage}
  \begin{minipage}[b]{0.33\linewidth}
  \centering
    \includegraphics[width=\linewidth]{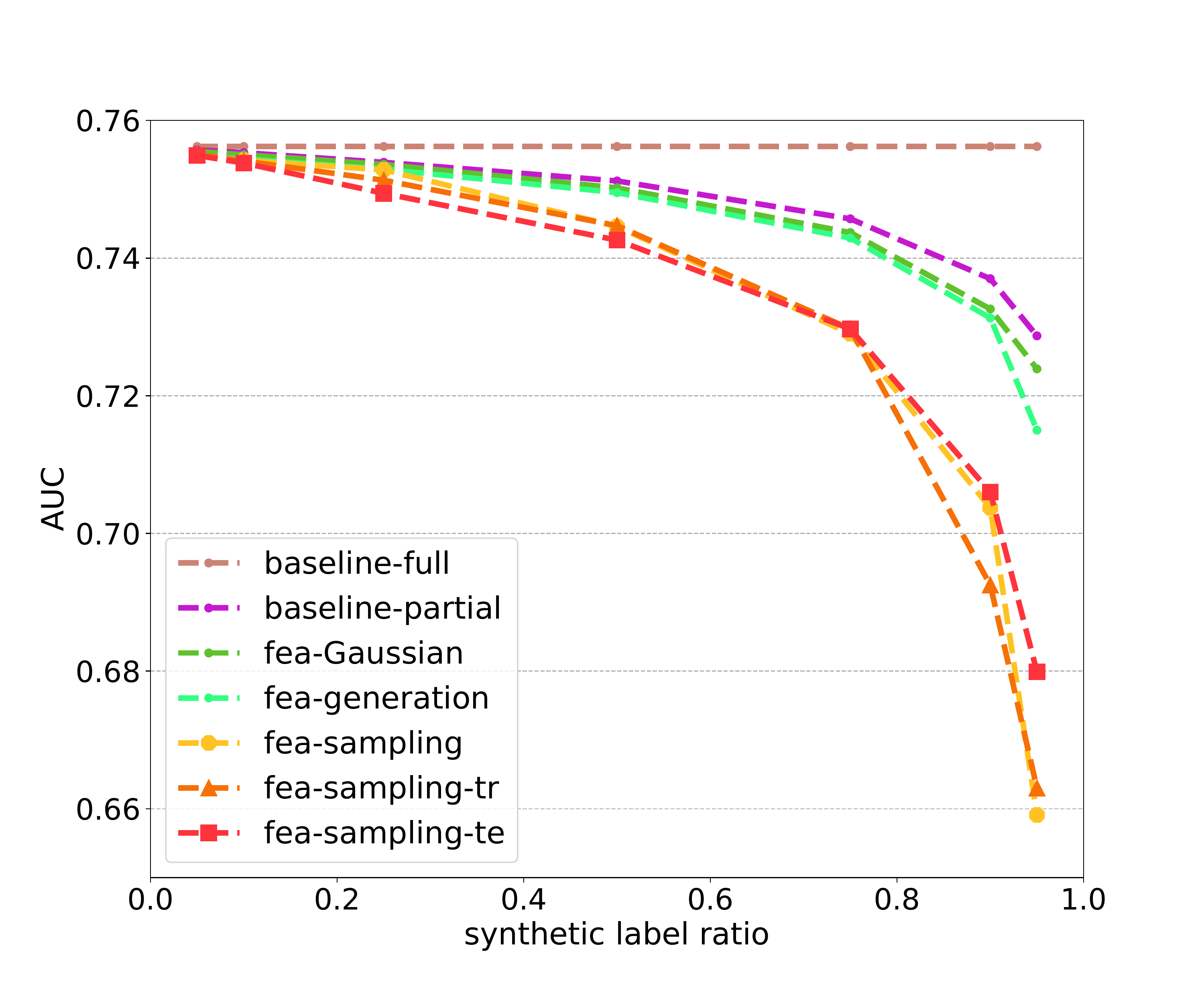}
    \caption*{(b): Synthetic Feature AUC}
  \end{minipage}
   \begin{minipage}[b]{0.33\linewidth}
  \centering
    \includegraphics[width=\linewidth]{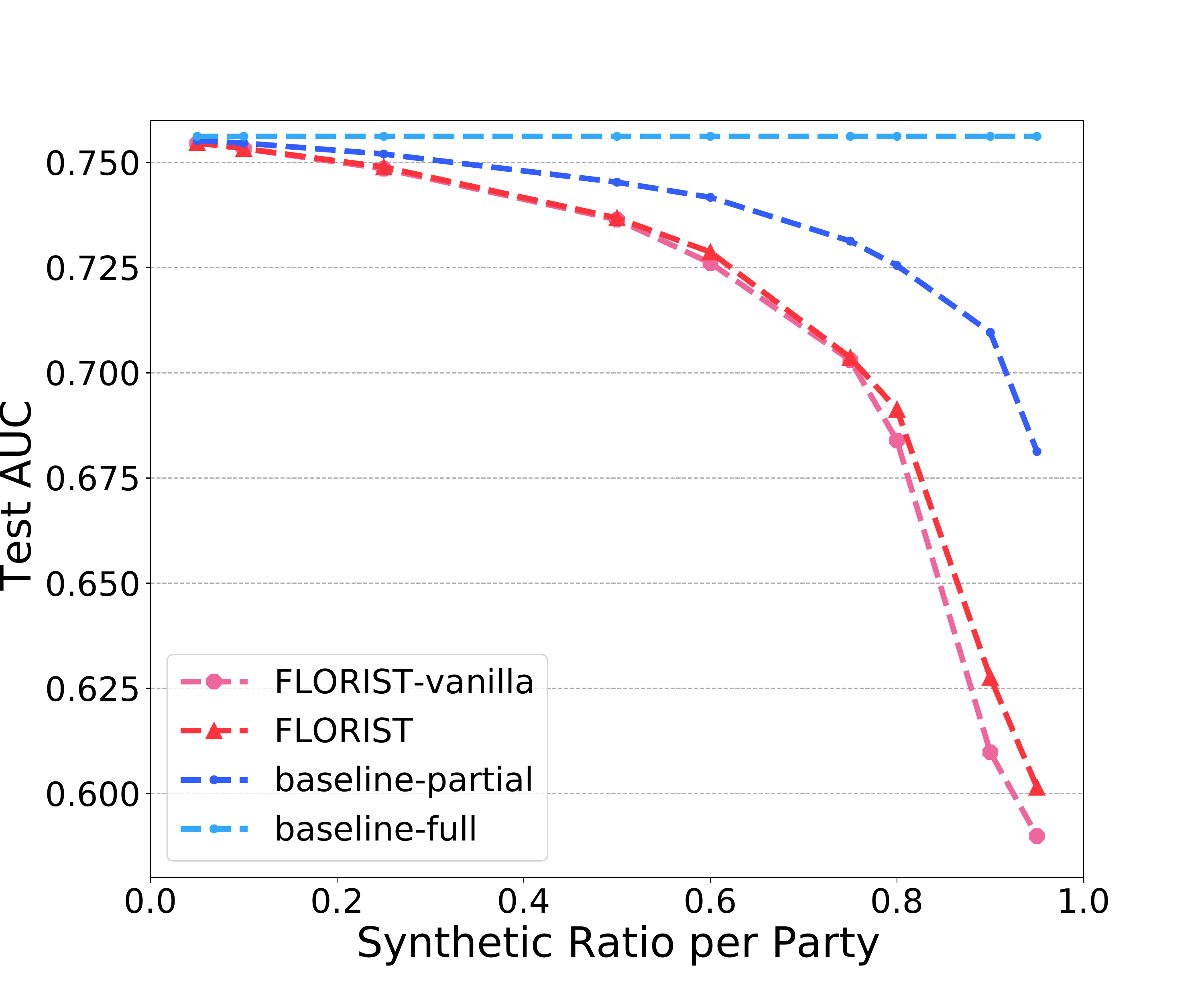}
      \caption*{(c): Synthetic Both AUC}
  \end{minipage}
  \medskip
    \begin{minipage}[b]{0.33\linewidth}
  \centering
    \includegraphics[width=\linewidth]{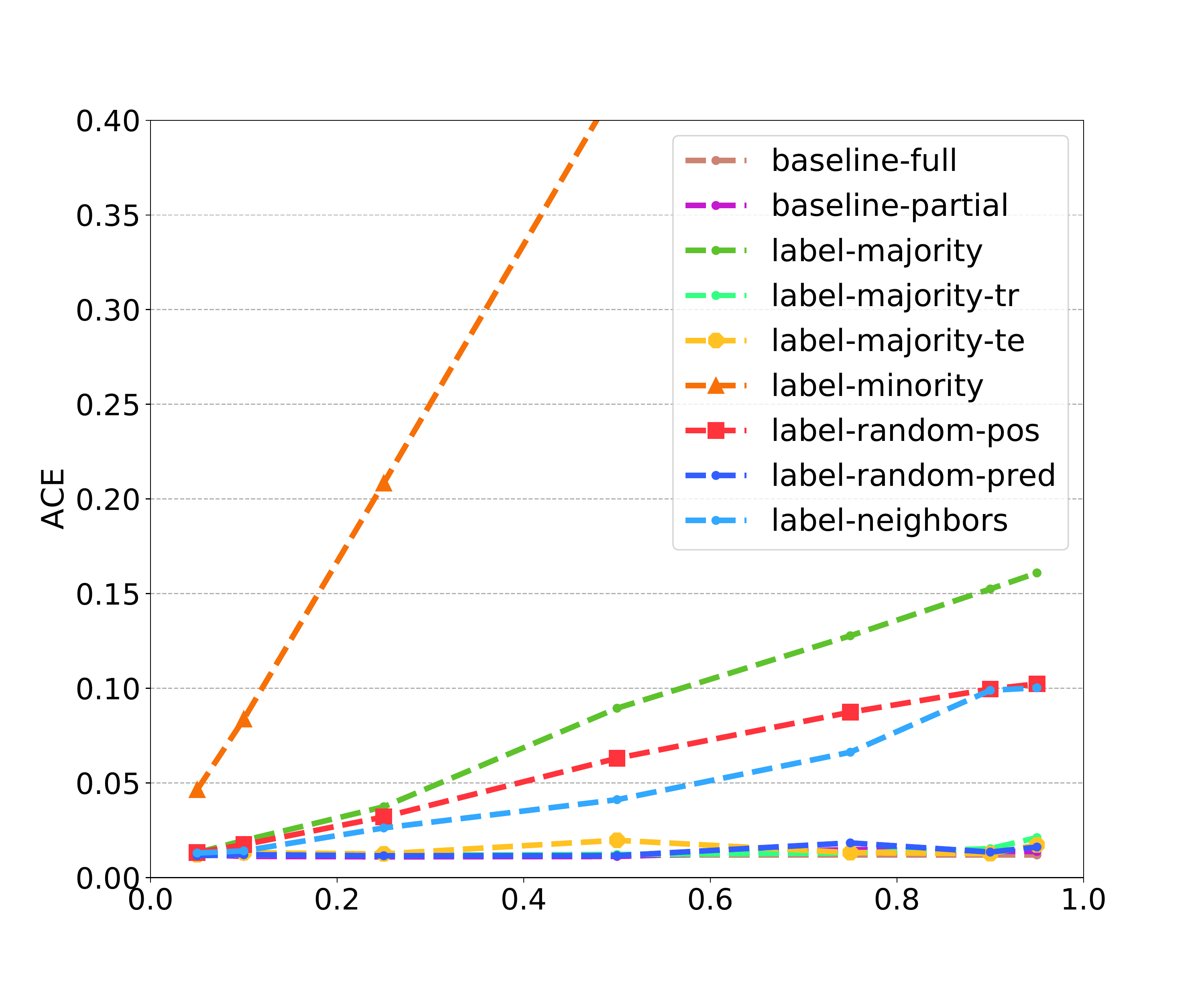}
      \caption*{(d): Synthetic Label ACE}
  \end{minipage}
  \begin{minipage}[b]{0.33\linewidth}
  \centering
    \includegraphics[width=\linewidth]{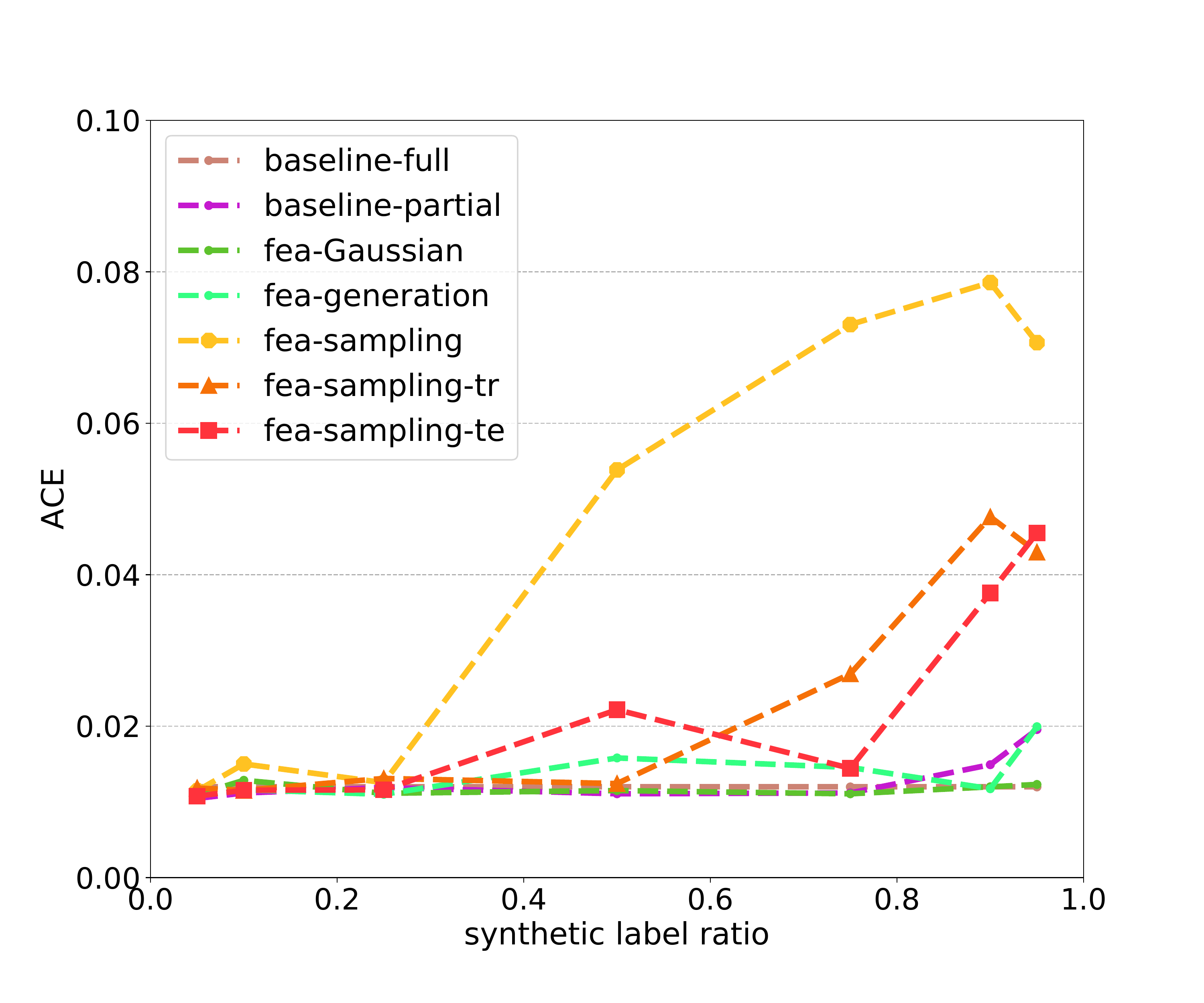}
    \caption*{(e): Synthetic Feature ACE}
  \end{minipage}
   \begin{minipage}[b]{0.33\linewidth}
  \centering
    \includegraphics[width=\linewidth]{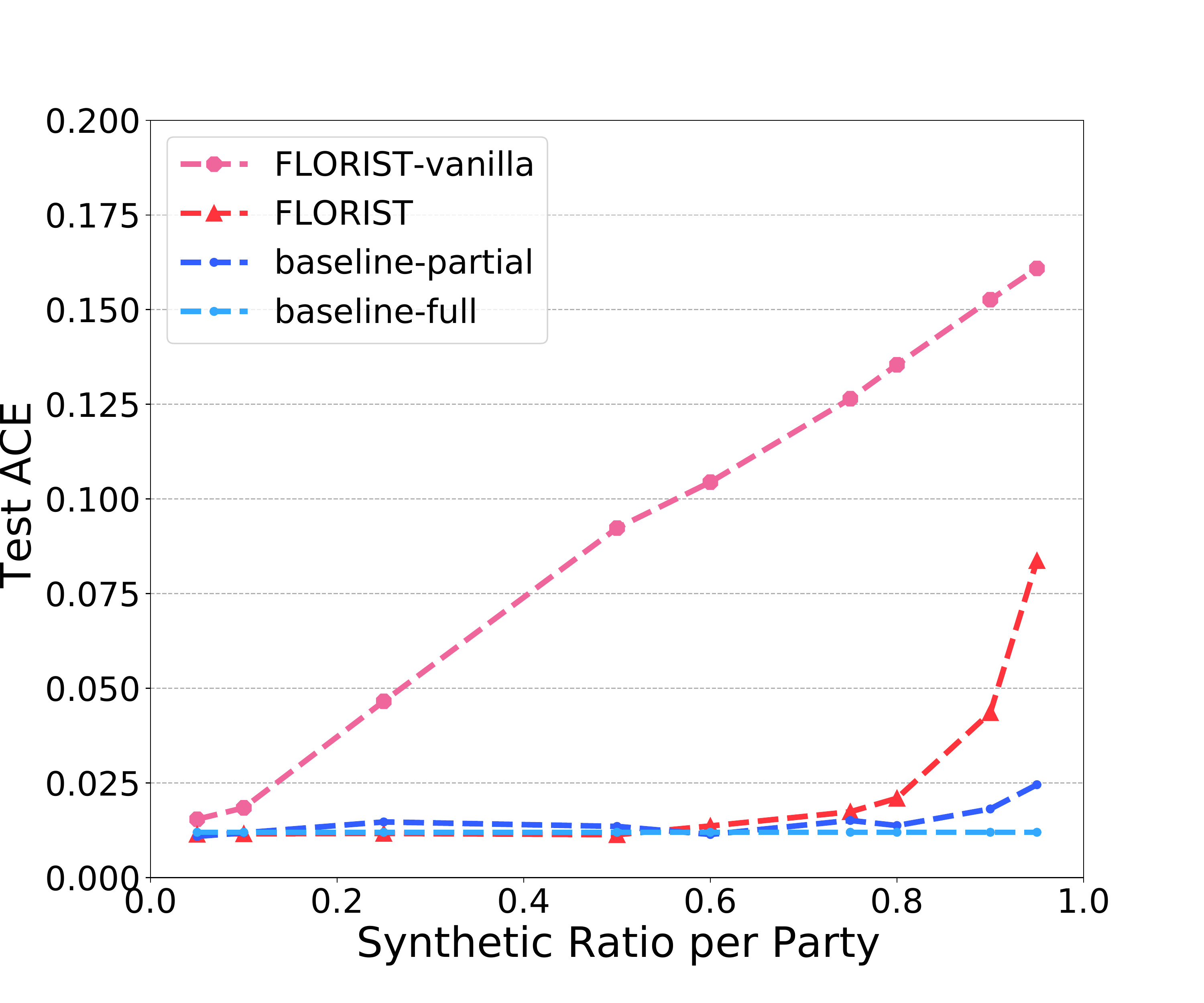}
      \caption*{(f): Synthetic Both ACE}
  \end{minipage}
   \caption{Performance of different synthetic strategies on  Avazu dataset. Performance is evaluated by test AUC and ACE. Figure (c) and (f): Each value in the x-axis represents a synthetic ratio for each party. $\alpha$ is the synthetic feature ratio for the passive party and $\beta$ is the synthetic label ratio for the active party. $\alpha = \beta$. 
 }
 \label{fig:avazu_utility_privacy} 
  \end{figure*}

In this section, 
we report the experiment result on Avazu dataset. 
Avazu is a dataset of click-through data with approximately 40 million entries (11 days of clicks/not clicks of Avazu data).
Overall,
the trends of all evaluation metrics are very similar to those in the experiments on Cretio reported in Section~\ref{sec:experiments}.

\subsection{Simulations with Synthetic Labels Only}
The experiment results on Avazu dataset of AUC and ACE are shown in Figure~\ref{fig:avazu_utility_privacy} (a) and (d) respectively.
The attack AUC can be found in Table \ref{tab:syn_label_avazu_leak_auc}.
We can see that \synzero{} can achieve similar AUC compared to \baselinepartial{},
while maintaining a relatively low attack AUC (about 0.55).
Furthermore,
by applying logits shift,
ACE can be lowered to the scale of \baselinepartial{}.
These experiment results reaffirm the validity of using \synzero{} with logits shift as the generation strategy for the active party.

\begin{table}[!ht]\centering
\caption{Attack AUC of different synthetic label generation strategies on Avazu.}\label{tab:syn_label_avazu_leak_auc}\scriptsize
\begin{tabular}{c|cccccccc}\toprule
$\beta=0.5$ &.-majority &.-majority-tr &.-majority-te &.-minority &.-random-pos &.-random-pred &.-neighbors \\\midrule
Attack AUC &
0.558341 &0.558277 &0.557430 &0.584588 &0.442587&0.500570&0.536879\\
\bottomrule
\end{tabular}
\vspace{-0.15in}
\end{table}

\subsection{Simulations with Synthetic Features Only}
The experiment results on Avazu dataset of AUC and ACE are shown in Figure~\ref{fig:avazu_utility_privacy} (b) and (e) respectively.
The attack AUC can be found in Table \ref{tab:syn_fea_avazu_leak_auc}.
Again we find that \feaGaussian{} and \feaRandom{} have comparable AUC with \baselinepartial{}.
However their attack AUC is very close to $1$,
meaning that they are not suitable for privacy-sensitive vFL.
On the other hand,
\feaSampling{} has a reasonable attack AUC,
and the drop on AUC is moderate compared to \baselinepartial{}.
For instance,
the AUC drops about $6.7\%$ and $0.88\%$ compared with \baselinepartial{} when $\alpha = 0.95$ and $\alpha = 0.5$ respectively.
We also observe that logits shift can significantly lower the ACE.
\begin{table}[!ht]\centering
\caption{Attack AUC of different synthetic feature generation strategies on Avazu.}\label{tab:syn_fea_avazu_leak_auc}
\scriptsize
\begin{tabular}{c|ccccc}\toprule
$\alpha$ = 0.5  &fea-Gaussian &fea-random &fea-sampling &fea-sampling-tr &fea-sampling-te \\\midrule
Attack AUC  &
0.896055 &0.897465 & 0.540238 &0.534171 &0.557020\\
\bottomrule
\end{tabular}
 \vspace{-0.15in}
\end{table}

\subsection{Simulations with Both Synthetic Labels and Features}
As in Section~\ref{sec:experiments},
 here we report our experimental results with $\alpha = \beta$.
 The experiment results of AUC and ACE are shown in Figure~\ref{fig:avazu_utility_privacy} (c) and (f) respectively.
 We find that by applying logits shift,
 \ourapp{} greatly reduces ACE compared to \ourappvanilla{}.
 We also report the AUC drop for \ourapp{} compared to \baselinepartial{}.
The drop is barely minimal in scenarios close to real application (like $\alpha=\beta=0.5$),
and is still affordable even when $\alpha,\beta$ is as large as $0.95$.
 \begin{table}[h]\centering
\caption{AUC change compared to baseline-partial on Avazu.}\label{tab:our_approach_auc_drops_avazu}
\scriptsize
\begin{tabular}{c|cccccccccc}\toprule
\textbf{ $\alpha = \beta$} &{0.05} &{0.1} &{0.25} &{0.5} &{0.6} &{0.75} &{0.8} &{0.9} &{0.95} \\\midrule
$\Delta$ AUC & -0.066\%
&-0.17\%
&-0.41\%
&-1.14\%
&-1.75\%
&-3.78\%
&-4.71\%
&-11.55\%
&-11.71\%\\
\bottomrule
\end{tabular}
\vspace{-0.1in}
\end{table}

\section{Data Setup and Experimental Details}
\label{sec:data_setup_experimental_details}

We first describe how we first preprocess each of the datasets. We then describe the model architecture used for each dataset. Finally, we describe what are the training hyperparameters used for each dataset/model combination and the total amount of compute required for the experiments.

\subsection*{Dataset preprocessing}

\paragraph{[Criteo]} Every record of Criteo has $27$ categorical input features and $14$ real-valued input features. We first replace all the \texttt{NA} values in categorical features with a single new category (which we represent using the empty string) and all the \texttt{NA} values in real-valued features by $0$. For each categorical feature, we convert each of its possible value uniquely to an integer between $0$ (inclusive) and the total number of unique categories (exclusive). For each real-valued feature, we linearly normalize it into $[0,1]$. We then randomly sample $10\%$ of the entire Criteo publicly provided training set as our entire dataset (for faster training to generate privacy-utility trade-off comparision) and further make the subsampled dataset into a 90\%-10\% train-test split.

\paragraph{[Avazu]} Unlike Criteo, each record in Avazu only has categorical input features. We similarly replace all \texttt{NA} value with a single new category (the empty string), and for each categorical feature, we convert each of its possible value uniquely to an integer between $0$ (inclusive) and the total number of unique categories (exclusive). We use all the records in provided in Avazu and randomly split it into 90\% for training and 10\% for test.

\subsection*{Model architecture details}

\paragraph{[Criteo]} We modified a popular deep learning model architecture WDL \cite{cheng2016wide} for online advertising. We first process the categorical features in a given record by applying an embedding lookup for every categorical feature's value. We use an embedding dimension of 4 for the deep part. After the lookup, the deep embeddings are then concatenated with the continuous features to form the raw input vectors for the deep part. The deep part processes the raw features using 4 ReLU-activated 128-unit MLP layers before producing a final logic value. The cut layer is after the output of the 2rd ReLU layer on the deep part.

\subsection*{ Model training details}

To ensure smooth optimization and sufficient training loss minimization, we use a slightly smaller learning rate than normal.

\paragraph{[Criteo]}
We use the Adam optimizer with with a batch size of 8,192 and a learning rate of $1e$$-4$ throughout the entire training of 3 epochs (approximately 15k stochastic gradient updates).

\paragraph{[Avazu]}
We use the Adam optimizer with a batch size of 8,192 and a learning rate of $1e$$-4$ throughout the entire training of 3 epochs (approximately 15k stochastic gradient updates).

We conduct our experiments over 8 Nvidia Tesla V100 GPU card. Each epoch of run of Avazu takes about 10 hours to finish on a single GPU card occupying 4GB of GPU RAM. Each epoch run of Criteo takes about 11 hours to finish on a single GPU card using 4 GB of GPU RAM. 
\end{document}